\def\1{\bm{1}}
\DeclareMathAlphabet{\mathsfit}{\encodingdefault}{\sfdefault}{m}{sl}
\SetMathAlphabet{\mathsfit}{bold}{\encodingdefault}{\sfdefault}{bx}{n}
\newcommand{\E}{\mathbb{E}}
\newcommand{\R}{\mathbb{R}}
\DeclareMathOperator*{\argmax}{arg\,max}
\DeclareMathOperator*{\argmin}{arg\,min}
\definecolor{darkblue}{rgb}{0, 0, 0.5}
\newcommand{\cmark}{\ding{51}}%
\newcommand{\xmark}{\ding{55}}%
\theoremstyle{plain}
\newtheorem{theorem}{Theorem}[section]
\newtheorem{lemma}[theorem]{Lemma}
\newtheorem{fact}[theorem]{Fact}
\theoremstyle{definition}
\newtheorem{definition}[theorem]{Definition}
\newtheorem{example}[theorem]{Example}
\theoremstyle{remark}
\newtheorem{remark}[theorem]{Remark}
\newcommand*\diff{\mathop{}\!\mathrm{d}}
\renewcommand{\P}{\mathop{\mathbb{P}}}
\newcommand{\watermark}{{\sf Watermark}}
\newcommand{\detect}{{\sf Detect}}
\newcommand{\wmk}{{\sf k}}
\newcommand{\fullres}[2]{$#1_{#2}$}
\def\cA{\mathcal{A}}
\def\cV{\mathcal{V}}
\newcommand{\logit}{{u}}
\definecolor{GoogleGreen}{RGB}{60,186,84}
\definecolor{GoogleRed}{RGB}{219,50,54}
\title{Permute-and-Flip: An optimally stable and \\watermarkable decoder for LLMs
}
\author{Xuandong Zhao \\ UC Berkeley \\ \small \texttt{xuandongzhao@berkeley.edu} 
\And
Lei Li \\ Carnegie Mellon University \\ \small \texttt{leili@cs.cmu.edu} 
\And
Yu-Xiang Wang \\ UC San Diego \\ \small \texttt{yuxiangw@ucsd.edu} 
}
\begin{document}

\maketitle

\begin{abstract}
In this paper, we propose a new decoding method called Permute-and-Flip (PF) decoder. It enjoys stability properties similar to the standard sampling decoder, but is provably up to 2x better in its quality-stability tradeoff than sampling and never worse than any other decoder. We also design a cryptographic watermarking scheme analogous to \citet{aaronson}'s Gumbel watermark, but naturally tailored for PF decoder. The watermarking scheme does not change the distribution to sample, while allowing arbitrarily low false positive rate and high recall whenever the generated text has high entropy. Our experiments show that the PF decoder (and its watermarked counterpart) significantly outperform(s) naive sampling (and its Gumbel watermarked counterpart) in terms of perplexity, while retaining the same stability (and detectability), hence making it a promising new approach for LLM decoding. The code is available at \url{https://github.com/XuandongZhao/pf-decoding}.

\end{abstract}


\section{Introduction}
\label{sec:intro}
Large language models (LLMs) \citep{OpenAI2022ChatGPT, OpenAI2023GPT4TR, bai2022constitutional,  touvron2023llama} have become increasingly popular in recent years due to their ability to generate human-like text and solve many tasks through a natural chatbot interface. 

A language model predicts the next word in a sentence using a real-value function $\logit(\cdot;\text{prompt},\text{prefix}): \cV\rightarrow \R$, known as \emph{logits}, which encodes the model's preferences on which word to choose. Here $\cV$ is the vocabulary space (typically a large discrete set of words); the ``prompt'' describes the task of interest; and ``prefix'' includes all preceding words that have been generated so far. 

A language model \emph{decoder} refers to a possibly randomized function that takes a prompt text $x$, API access to the \emph{logits} function as input, and outputs a sentence $y_{1:n}$. 

\textbf{The main thrust of this paper} is to introduce a new decoder, termed \emph{Permute-and-Flip decoding}, work out some of its intriguing properties with an application to watermarking LLM text, and hopefully convince readers that it deserves a shot at your next LLM application. 


\section{Problem Setup and Summary of Results}

Before diving in, let's set up the stage with a quick tour of existing decoding methods and have a brief look into 
how a language model decoder can be evaluated.

\textbf{Popular existing decoding methods} fall into three categories: (1) Planning-based methods such as beam search that aims at maximizing the sequence likelihood; (2) sampling-based methods that recursively sample from the next-word distribution, e.g., the soft(arg)max transform of the logits 
\begin{equation}\label{eq:sampling}
\textbf{Softmax sampling: }  \text{\small $y_t \sim p(y) = \frac{e^{\logit(y | x,y_{1:t-1}) / T}}{\sum_{\tilde{y}}e^{\logit(\tilde{y} | x,y_{1:t-1})/T}}$}
\end{equation}
where $T$ is the \emph{temperature} parameter; and (3) greedy methods such as greedy decoding that simply outputs  $y_t = \argmax_{y\in\cV} \logit(y | x,y_{1:t-1})$ as well as its Top $p$ \citep{holtzman2019curious} and Top $k$ sampling variants that interpolate greedy and sampling methods.


\textbf{Performance metrics.} How do we compare different decoding methods? More generally, how do we evaluate LLM-generated text? These are questions far from being settled.  Naturally, if there is a (possibly task-dependent) performance metric $U_x: \mathcal{V}^n \rightarrow \R$ one can define, then the optimal decoder would be the one that outputs $
y_{1:n}^* = \argmax_{y_{1:n}\in \cV^n} U_x(y_{1:n}).
$
Often $U_x$ is instantiated to be the sequence likelihood $\sum_{t=1}^n\log p( y_t | x, y_{1:t-1})$ which is equal to $\sum_{t=1}^n \logit_t(y_t)$. 


Recent works \citep{ippolito2019comparison,wiher2022decoding}, however, report that strategies that aim at maximizing sequence likelihood often result in texts that are more repetitive and less effective in some downstream tasks than those from the sampling-based methods \citep{holtzman2019curious}. 
Depending on what the task is, there is not a one-size-fits-all performance metric, therefore is no single decoding method that works well for all tasks.


For the moment, let us stash the disputes on how to best evaluate an LLM-generated text and focus on designing methods that maximize any user-specified utility function. In fact, we will also give up on solving the sequence-level utility maximization problem\footnote{It is known to be NP-Complete \citep{Chen2017RecurrentNN}.} and simply maximize a \emph{per-step} utility function $\logit_t: \cV \rightarrow \R$.

$\logit_t$ can simply be the logits function that LLMs output, which may have already accounted for potential future utility (like the $Q$ function in reinforcement learning) since the transformer-based language model had access to future texts during pre-training.  Or $\logit_t$ can be explicitly augmented with structure-inducing regularizers such as a lookahead heuristic as in A* decoding \citep{lu2021neurologic}, a retrieval-based term for fact-checking \citep{lewis2020retrieval}, or an entropy bonus for promoting diversity \citep{meister2020generalized}. 

\textbf{Our goal} is thus to construct a possibly randomized algorithm $\cA$ that takes $\logit_t$ as an input and outputs $y_t\in\cV$ that aims at maximizing $\E_{y_t\sim \cA_{u_t}}[\logit_t(y_t)]$ as much as possible. In the remainder of the paper, we will simply take $\logit_t$ as ``logits''  for a concrete exposition  --- all results are valid when $\logit_t$ is instantiated otherwise.



\textbf{Other constraints / consideration.} Why doesn't the trivial greedy decoder work?  
That's because there are other considerations besides text quality when selecting LLM decoders.
 For example, \textbf{computational efficiency and latency} are hugely important, since each API call to the  \emph{logits} function is costly. The \textbf{diversity} of the generated text is also important, especially for creative tasks.

Moreover, the decoding procedure should be \textbf{watermarkable} \citep{aaronson, kirchenbauer2023watermark,Zhao2023ProtectingLG,Kuditipudi2023RobustDW} in the sense that one should be able to inject subtle statistical signals that can be retrieved when given a secret key, to \emph{prove} that the text is generated by this particular language model.  Being watermarkable prevents the LLM from being used for malicious purposes such as scams \citep{Weidinger2021EthicalAS}, fake news \citep{zellers2019defending}, and plagiarism \citep{StokelWalker2022AIBC}.

In addition to the above, one may also hope the decoding algorithm to be \textbf{stable against small perturbations} to the \emph{logits}. Specifically, 
\begin{definition}[Stability]\label{def:stability}
    We say a decoding algorithm $\mathcal{A}$ is $L$-stable if for any prompt $x$, prefix $y_{\leq t}$, and for any perturbed $\tilde{\logit}$ such that $\|\tilde{\logit} - \logit\|_{\infty} \leq \delta$, the log-probability ratio satisfies 
    $$\left|\log\left\{\frac{p_{\mathcal{A}(\tilde{\logit}(\cdot|x,y_{\leq t}))}(y)}{p_{\mathcal{A}(\logit(\cdot|x,y_{\leq t}))}(y)}\right\}\right|\leq L\delta \;\;\forall y\in\cV.$$
\end{definition}
The stability helps to avoid catastrophic failure in the scenarios where the logits may be subject to data poisoning \citep{zhang2021inject, lin2021ml} or jailbreaking attacks \citep{Zhang2023OnTS, zhao2024weak}. Furthermore, \emph{stability} implies an intuitive notion of \emph{diversity}, suggesting that tokens with similar logits should have comparable probabilities of being selected. For further discussion and examples, please refer to Appendix \ref{sec:dis_stable}.

\begin{table}[t]
\centering
    \caption{Comparison of different decoding methods against five desiderata.}
    \label{tab:compare_decoders}
\resizebox{0.99\textwidth}{!}{
    \begin{tabular}{c|ccccc}
    \toprule
        Methods & Perplexity & Computational Efficiency & Diversity & Watermark & Stability \\ \midrule
        Search (e.g., Beam)      &Lowest & \xmark&\xmark&\xmark &\xmark \\
        Greedy      &Low & \cmark &\xmark&\xmark&\xmark \\
        Softmax Sampling  &Moderate   & \cmark&\cmark&\cmark&\cmark \\ 
        Top-$p$ Sampling     &Low (for small $p$)    & \cmark &Depends on $p$&\cmark&\xmark \\
        Top-$k$ Sampling     &Low (for small $k$)    & \cmark&Depends on $k$&\cmark&\xmark \\\midrule
        PF Sampling (ours)      &Lower than Softmax&\cmark&\cmark&\cmark&\cmark \\
    \bottomrule
    \end{tabular}
    }
\end{table}
Inspecting the decoding methods along the aforementioned dimensions, we notice that planning-based methods fail to be computationally efficient. 
While greedy decoding is efficient and has relatively low perplexity, its generated texts are neither diverse nor watermarkable (at least not using existing techniques). The sampling-based methods, however, are both watermarkable and diverse. In addition, softmax sampling is known to be $2$-stable, while the others that we have discussed are not stable.
\begin{fact}\label{fact:EM}
Softmax sampling decoding using \eqref{eq:sampling} with temperature $T$ satisfies $(2/T)$-stability.
\end{fact}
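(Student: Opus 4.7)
The plan is a direct computation that unpacks the definition of $L$-robustness for the softmax distribution. Writing $p_u(y) := e^{u(y)/T}/Z(u)$ with partition function $Z(u) = \sum_{\tilde y} e^{u(\tilde y)/T}$, I would start by taking logs and separating the ratio into two pieces:
\begin{equation*}
\log\frac{p_{\tilde u}(y)}{p_u(y)} = \frac{\tilde u(y) - u(y)}{T} \;-\; \log\frac{Z(\tilde u)}{Z(u)}.
\end{equation*}

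The first step is to bound the pointwise term: since $\|\tilde u - u\|_\infty \le \delta$, we have $|\tilde u(y) - u(y)|/T \le \delta/T$ immediately.

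The second step is to bound the log-partition-function term. The key observation is that $\tilde u(\tilde y) \le u(\tilde y) + \delta$ for every $\tilde y$, so $e^{\tilde u(\tilde y)/T} \le e^{\delta/T} e^{u(\tilde y)/T}$; summing over $\tilde y$ gives $Z(\tilde u) \le e^{\delta/T} Z(u)$. The symmetric inequality $\tilde u(\tilde y) \ge u(\tilde y) - \delta$ yields $Z(\tilde u) \ge e^{-\delta/T} Z(u)$. Hence $|\log(Z(\tilde u)/Z(u))| \le \delta/T$.

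Combining the two bounds by the triangle inequality yields $|\log(p_{\tilde u}(y)/p_u(y))| \le 2\delta/T$ for every $y\in\cV$, which is exactly $L$-robustness with $L = 2/T$. There is no real obstacle here; the only thing to be mindful of is that the inequality must hold uniformly in $y$ and that both directions of the partition-function bound are used, which is why the constant is $2/T$ rather than $1/T$ (and this is where the factor of $2$ improvement claimed later for PF decoding will have to come from).
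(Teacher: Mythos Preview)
Your argument is correct. The paper does not actually spell out a proof---it simply invokes the differential-privacy guarantee of the exponential mechanism \citep[Theorem~6]{mcsherry2007mechanism}---and the direct computation you give (splitting the log-ratio into the pointwise term and the log-partition-function term, each bounded by $\delta/T$) is precisely the standard proof of that cited result, so the approaches coincide.
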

\begin{proof}
The result is implied by the differential privacy guarantee of exponential mechanism \citep[Theorem 6]{mcsherry2007mechanism}.
\end{proof}
The pros and cons of different decoding methods are summarized in Table~\ref{tab:compare_decoders}.
 From the table, we can see that there is a clear tradeoff between minimizing perplexity and preserving other properties. In particular, softmax sampling is the only method that checks all boxes, and the only one that is stable among existing decoders. This observation begs the following research question:
\emph{Is there a decoding method that is as stable as softmax sampling, but has lower perplexity?}
 



In this paper, we answer this question affirmatively by bringing in a technique called Permute-and-Flip sampling. 
Our contributions are fourfold.

\begin{enumerate}[leftmargin=*, itemsep=0pt, topsep=0pt]
    \item We introduce Permute-and-Flip decoding --- a new decoding algorithm for language models based on recent development in a very different context \citep{mckenna2020permute}.  
    \item We demonstrate that existing results from \citet{mckenna2020permute} already imply that:
    \begin{itemize}[leftmargin=*, itemsep=0pt, topsep=0pt]
        \item Permute-and-Flip decoding is provably stable. 
        \item The stability-perplexity tradeoff of the PF decoding is Pareto-optimal. In particular, when compared to softmax sampling, PF decoding has up to 2x smaller expected suboptimality while having the same stability parameter $L$.
    \end{itemize}
    \item We designed an analog of \citet{aaronson}'s Gumbel-Watermark for PF decoder, called the PF watermark. We show that the watermarked PF decoder samples from a distribution that is computationally indistinguishable from the non-watermarked PF decoder, and the detection procedure has precisely controlled false positive rate (FPR) and high power in identifying watermarked texts.
    \item We empirically demonstrate that on open-generation tasks, PF watermark achieves the best balance of the highest detection accuracy and lowest perplexity compared to the baselines.
\end{enumerate}

Overall, our proposed permute-and-flip decoding method provides a promising approach to balancing the tradeoff between perplexity and stability in LLM decoding while also admitting watermarking capabilities.

\paragraph{Related work and novelty.}  PF sampling was invented in the differential privacy (DP) literature \citep{mckenna2020permute}. Its stability properties are well-understood.  The stability of Softmax sampling is also well-known \citep{mcsherry2007mechanism}. Our contribution is in applying this method to LLM decoding and connecting these known theoretical results to the broader ML audience. To our knowledge, the PF watermark is new to this paper. The design of the watermark leverages the Report-Noisy-Max interpretation of the PF sampling \citep{ding2021permute} which allows a similar pseudo-random function like the work of \citet{aaronson} to be applied. A more thorough discussion of the related work is given in Appendix \ref{sec:related}.






\section{Permute-and-Flip Decoding its Properties}
\label{sec:pre}


The Permute-and-Flip decoding iteratively generates the next token by a simple procedure that uses only the logits.
It involves first randomly permuting the vocabulary, then flipping a sequence of biased coins according to the permuted sequence until the first ``head'' is seen (see Algorithm~\ref{alg:PF_decoding}).

\begin{algorithm}[H]
\small
   \caption{Permute and Flip (PF) Decoding}\label{alg:PF_decoding}
\begin{algorithmic}[1]
   \STATE {\bfseries Input:} prompt $x$, language model $\mathcal{M}$, temperature $T$.
   \FOR{$t=1, 2, \cdots$}
   \STATE Logits $\logit_t\leftarrow \mathcal{M}([x, y_{1:t-1}])$. 
   \STATE Find $\logit_t^*\leftarrow \max_{y\in\mathcal{V}} \logit_t(y)$.
   \STATE $\mathsf{Permute:}$ Shuffle the vocabulary $\cV$ into $\tilde{\cV}$.
    \FOR{$y\in \tilde{\cV}$}
       \STATE $\mathsf{Flip:}$ Draw $Z\sim \textrm{Bernoulli}\left(\exp{(\frac{\logit_t(y)-\logit_t^*}{T})}\right)$. 
       \STATE \textbf{if} $Z = 1$,  \textbf{then} assign $y_t \leftarrow y$ and \textbf{break}.
    \ENDFOR
   \ENDFOR
   \STATE {\bfseries Output:} Generated sequence $y = [y_1,...,y_n]$. 
\end{algorithmic}
\end{algorithm}
Permute-and-flip makes words with higher logits exponentially more likely --- even more so than Softmax sampling (Eq. \ref{eq:sampling}). To see this, one may consider a rejection sampling algorithm for obtaining a sample from Eq. (\ref{eq:sampling}), which repeats the following procedures until it halts.
\begin{enumerate}[leftmargin=*, itemsep=0pt, topsep=0pt]
    \item Uniformly samples $y\in \mathcal{V}$,
    \item Return it with probability: $$p(y)/ p(y^*) = \exp\left( (\logit_t(y) - \logit_t(y^*))/T\right).$$
\end{enumerate}

This procedure differs from PF sampling in that it samples $y$ \emph{with replacement}, whereas PF sampling samples $y$ \emph{without replacement}, giving PF sampling a higher likelihood of producing $y^*$.

PF sampling was initially proposed in \citet{mckenna2020permute} as a differentially private selection mechanism that has better utility than the more well-known exponential mechanism \citep{mcsherry2007mechanism}.  \citet{mckenna2020permute} also derived a plethora of theoretical properties of the PF sampling. The following theorem summarizes these results in the language of LLM decoding.
\begin{theorem}\label{thm:pf_sampling}
Let the logits function be $\logit$ and $\logit^* = \max_{y\in\cV} \logit(y)$. Let $\mathrm{PF}(\logit)$ be the distribution of PF-sampling, and $\mathrm{Softmax}(\logit)$ be the distribution in \eqref{eq:sampling}, both with temperature parameter $T$. The following statements are true.
\begin{enumerate}[leftmargin=*, itemsep=0pt, topsep=0pt]
     \item (\textbf{Same stability}) PF-Sampling is $(2/T)$-stable.
     \item  (\textbf{Nearly greedy})  PF-sampling obeys, $$\E_{y\sim\mathrm{PF}(\logit)}[\logit(y)  ]\geq  \logit^* - T\log|\cV|.$$ 
    \item (\textbf{``Never worse''}) For the same $T$, PF-sampling is never worse than Softmax-sampling.
    $$
            \E_{y\sim\mathrm{PF}(\logit)}[\logit(y)]\geq \E_{ y\sim\mathrm{Softmax}(\logit)}[\logit(y)]
    $$
    \item (\textbf{``Up to 2x better''}) There exists logits $\logit$ such that PF-sampling is 2x smaller in terms of suboptimality,
    $$
    \E_{y\sim\mathrm{PF}(\logit)}[ \logit^* - \logit(y)  ]\leq \frac{1}{2} \E_{y\sim\mathrm{Softmax}(\logit)}[ \logit^* - \logit(y)  ].
    $$
    \item 
    (\textbf{Optimal stability-perplexity tradeoff}) For any decoder $P$  that is $2/T$-stable, if there exists $\logit$ such that 
    $$\E_{y\sim P(\logit)}[  \logit(y)  ]> \E_{y\sim\mathrm{PF}(\logit)}[ \logit(y)  ] 
    $$
    then there must be another $\tilde{\logit}$ such that 
   $$\E_{y\sim P(\tilde{\logit})}[  \tilde{\logit}(y)  ]<  \E_{y\sim\mathrm{PF}(\tilde{\logit})}[\tilde{\logit}(y)  ]. 
    $$
\end{enumerate}
\end{theorem}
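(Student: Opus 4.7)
The plan is to reduce the claim to the Pareto-optimality result established for the Permute-and-Flip mechanism in the differential privacy literature \citep{mckenna2020permute}, exploiting the fact that Definition~\ref{def:robustness} is essentially a restatement of pure differential privacy for selection problems. The first step is to formalize the bridge: a decoder $\cA$ is $L$-robust with $L = 2/T$ if and only if, viewed as a selection mechanism over $\cV$ with utility $\logit$, it is $\epsilon$-DP with $\epsilon = L\delta$ when two utility vectors at $\ell_\infty$-distance $\delta$ are taken as neighbors. Under this identification, both $\mathrm{PF}(\logit)$ at temperature $T$ and the hypothetical competitor $P$ become $(2/T)$-DP selection mechanisms on the same utility landscape.

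Given the bridge, the second step is to invoke the Pareto-optimality theorem for PF from \citet{mckenna2020permute}: among all $\epsilon$-DP selection mechanisms, PF is not dominated, i.e.\ if some $P$ satisfies $\E_{y\sim P(\logit)}[\logit(y)] \geq \E_{y\sim\mathrm{PF}(\logit)}[\logit(y)]$ for every $\logit$, then the inequality is an equality everywhere. Its contrapositive is exactly statement (5). If one prefers a self-contained derivation, I would first symmetrize $P$ over permutations of $\cV$ (which preserves $L$-robustness by the triangle inequality and, since PF itself is permutation-equivariant, does not change the comparison), and then compare $P$ to PF on a one-parameter family of plateau utilities $\logit^{(k)}$ having $k$ tied maxima and the remaining coordinates driven to $-\infty$. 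On such utilities PF becomes uniform over the plateau, and the $L$-robustness constraint rigidly links the mass $P(y^{*} \mid \logit)$ at the strict maximum of the input where $P$ beats PF to the masses $P(y \mid \logit^{(k)})$ along the plateau. Summing the resulting per-coordinate inequalities should force some $\tilde{\logit} = \logit^{(k)}$ at which the mass $P$ ``spent'' to out-perform PF on $\logit$ is visible as a deficit, yielding the strict inequality $\E_{y\sim P(\tilde\logit)}[\tilde\logit(y)] < \E_{y\sim\mathrm{PF}(\tilde\logit)}[\tilde\logit(y)]$.

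The main obstacle is this last step: turning a local gain at $\logit$ into a provable strict loss at a specific $\tilde\logit$ of the same magnitude. In the DP proof this is done by an LP-duality argument, equivalently by exhibiting a coupling of the Report-Noisy-Max noises across inputs such that the excess utility at $\logit$ and the deficit at $\tilde\logit$ cancel on aggregate. Transporting that coupling argument into the $L$-robustness language (rather than citing it) is the most delicate piece, since one must verify that the worst-case construction that witnesses tightness of the PF bound in the DP literature remains the tightest witness under the perturbation model of Definition~\ref{def:robustness}; because the two notions are definitionally equivalent up to the $\epsilon \leftrightarrow L\delta$ correspondence, I expect this to go through, but the bookkeeping on which neighboring pair realizes the binding constraint is where the proof will do its real work.
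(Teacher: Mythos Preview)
Your approach—reducing to \citet{mckenna2020permute} via the identification of $L$-robustness with pure differential privacy—matches the paper's, whose entire proof is a one-line citation to Theorem~1, Corollary~1, Theorem~2, Proposition~4, and Proposition~6 of that reference for statements (1)–(5) respectively. Your proposal explicitly treats only statement~(5), but the same bridge covers (1)–(4), and your optional self-contained sketch for~(5) already goes well beyond anything the paper attempts.
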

\begin{proof}
The theorem follows directly from \citet{mckenna2020permute}, specifically Theorem 1, Corollary 1, Theorem 2, Proposition 4, and Proposition 6.
\end{proof}

\begin{wrapfigure}{r}{0.45\textwidth}
\centering
\includegraphics[width=0.204\textwidth]{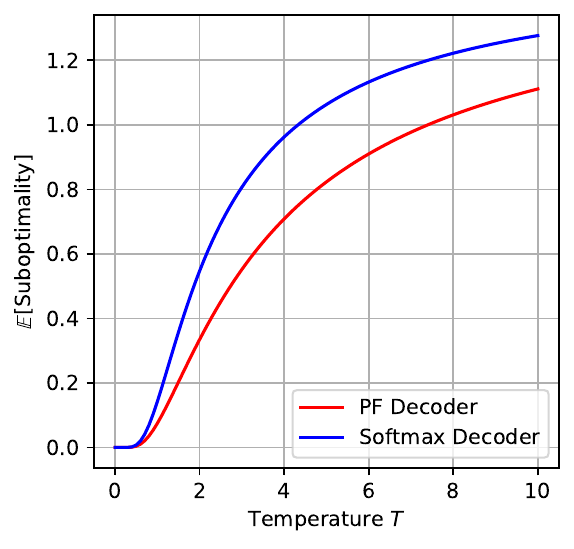}
\includegraphics[width=0.209\textwidth]{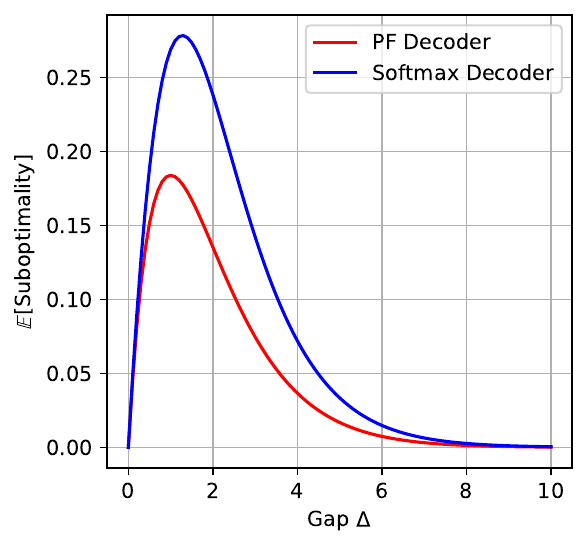}
\caption{Comparing PF decoder vs Softmax decoder using Example~\ref{ex:twotoken}. }
\vspace{-1.em}
\label{fig:suboptimality_fixed_temprature}
\end{wrapfigure}
The first statement shows that the PF decoder enjoys exactly the same stability parameter as in Fact~\ref{fact:EM}.  The second statement provides a worst-case bound on how far PF-sampling is away from greedy-decoding as a function of the temperature $T$ in terms of the likelihood achieved. The third and fourth statements show that PF-sampling is always ``more greedy'' than softmax-sampling. The last statement shows that PF-sampling is not dominated by any other decoder that is equally stable (as in Definition~\ref{def:stability}), thus \emph{Pareto optimal}. These results provide strong justification on the superiority of the permute-and-flip decoder over the standard softmax sampling in minimizing perplexity.

Let's consider a simple example to compare PF decoder and Softmax decoder.
\begin{example}\label{ex:twotoken}
     Let the $|\cV|=2$ and the corresponding logits be $[\Delta, 0]$ for gap $\Delta>0$. Softmax decoder chooses the suboptimal token with probability $1/(1+e^{\Delta/T})$, while PF decoder chooses it w.p. $1/({2}e^{\Delta/T})$.
\end{example}
Since $1/(1+x)> 1/(2x)$ for all $x>1$, the probability that the suboptimal token is chosen in PF sampling is strictly smaller than that of Softmax sampling. As shown in Figure~\ref{fig:suboptimality_fixed_temprature}, on the left, we fix the Gap $\Delta=3.0$ and vary the temperature $T$. On the right, we fix $T=1.0$ and vary $\Delta$. PF beats Softmax in all cases.




\section{Report-Noisy-Max and Watermarking}
\label{sec:method}

Next, we turn to the well-motivated problem of watermarking LLM generated text. The watermarking problem aims at embedding a secret message in the generated text that (essentially) reads ``Beware! I am written by an AI!''. 

The hope is that this message can be seen by anyone who has access to a \emph{secret key}, while ensuring that the watermarked version of the LLM generates text that has \emph{almost the same} distribution as (or at least very similar) to the original LLM. 

More formally, a watermarking scheme includes a ``$\watermark$'' function that injects the watermark and a ``$\detect$'' function that takes a suspect text sequence $y_{1:n}$ as input and outputs a prediction of $1$ (``It is watermarked!'') or $0$ (``It is not!''). 

A wrong accusation of non-watermarked text as watermarked is called a \emph{false positive}. A failure to detect a watermarked text is called a \emph{false negative}.
The performance of a watermark is measured by its detection \emph{power} (i.e., $1 - $\emph{false negative rate})  at a given \emph{false positive rate}. 

There are many other necessary properties for a watermarking scheme to be useful, such as \emph{low-overhead}, \emph{model-agnostic} detection, and \emph{resilience to edits} and other evasion attacks. We refer readers to the slide deck of \citet{aaronson} and the related work section of \citep{zhao2023provable} for a review of these desiderata and known results. 

Among the recent attempts, two popular watermarking schemes perform satisfactorily on all the above criteria. 
\setlist{nolistsep}
\begin{description}
\itemsep0em
    \item [Gumbel Watermark \citep{aaronson}] that uses a ``traceable'' pseudo-random softmax sampling when generating the next word.
    \item [Green-Red Watermark \citep{kirchenbauer2023watermark}] that randomly splits the vocabulary into Green and Red then slightly increases the logits for green tokens.
\end{description}
Both of them determine their pseudo-random seeds chosen according to the $m$ preceding tokens of the current token being generated. We will focus on explaining the Gumbel watermark as it is more closely related to our approach.

\paragraph{\citet{aaronson}'s Gumbel watermark.}
The key idea of the Gumbel watermark leverages the ``Gumbel-Max Trick'', which states that:
\begin{fact}[{\citealp{gumbel1948statistical}}]
 The softmax sampling in \eqref{eq:sampling} is equivalent to the following procedure 
\begin{equation}
    y_t =  \argmax_{y\in\cV}  \frac{\logit_t(y)}{T} +  G_t(y)
\end{equation}
where $G_t(y)\sim\textrm{Gumbel}(0,1)$ i.i.d for each $t,y$. 
\end{fact}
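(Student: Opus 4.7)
The plan is to compute the probability that a fixed token $y^* \in \cV$ wins the noisy argmax $\argmax_{y}\{\logit_t(y)/T + G_t(y)\}$ directly from the Gumbel CDF, and verify that it equals the softmax probability in \eqref{eq:sampling}. Since the Gumbel perturbations are drawn independently across time steps $t$, the claim reduces to a single-step computation, so I would suppress the subscript $t$ and set $X_y := \logit(y)/T + G(y)$ with $G(y) \overset{\text{i.i.d.}}{\sim} \mathrm{Gumbel}(0,1)$.

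First I would record the two facts that make the calculation work: a standard Gumbel has CDF $F(g) = \exp(-e^{-g})$, so $F_{X_y}(x) = \exp(-\alpha_y e^{-x})$ with $\alpha_y := e^{\logit(y)/T}$, and the density is $f_{X_y}(x) = \alpha_y e^{-x} \exp(-\alpha_y e^{-x})$. The structural observation that drives the proof is that these shifted-Gumbel CDFs factor multiplicatively in a way that is \emph{linear} in the $\alpha_y$ inside a single exponent: $\prod_{y \neq y^*} F_{X_y}(x) = \exp\bigl(-(Z - \alpha_{y^*})\, e^{-x}\bigr)$, where $Z := \sum_{y \in \cV} \alpha_y$. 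This is precisely the mechanism that produces the softmax denominator.

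Noting that ties occur with probability zero by continuity, I would then write
\begin{equation*}
  \P[y^* = \argmax_y X_y] = \int_{-\infty}^{\infty} f_{X_{y^*}}(x) \prod_{y \neq y^*} F_{X_y}(x)\, dx,
\end{equation*}
substitute to collapse the integrand to $\alpha_{y^*}\, e^{-x} \exp(-Z e^{-x})$, and evaluate via the change of variables $u = Z e^{-x}$ to land on $\alpha_{y^*}/Z$, which is exactly the softmax probability. There is no genuine obstacle here; the only things to be careful about are the tie argument, the interchange of integral and product, and (if $|\cV|$ were infinite) convergence of $Z$. Independence of $G_t(\cdot)$ across $t$ then immediately lifts the per-step distributional equivalence to the full sequential decoder.
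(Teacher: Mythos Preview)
Your argument is correct and is the standard derivation of the Gumbel-Max trick. The paper itself does not prove this statement at all --- it is recorded as a classical Fact with a citation to \citet{gumbel1948statistical} and simply used as-is --- so there is no ``paper's proof'' to compare against. Your computation (condition on the winner's value, multiply the other shifted-Gumbel CDFs into a single exponential via $\prod_{y\neq y^*}\exp(-\alpha_y e^{-x})=\exp(-(Z-\alpha_{y^*})e^{-x})$, then integrate to $\alpha_{y^*}/Z$) is exactly the textbook route and is fully rigorous for finite $\cV$.
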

Gumbel noise can be generated using a uniform r.v.
$$
\textrm{Gumbel}(0,1)\sim -\log\left(\log(1/\mathrm{Uniform}([0,1]))\right).
$$ So given a random vector $r_{t}\sim (\mathrm{Uniform}([0,1]))^{|\cV|}$, we can write $G_t(y)=-\log(-\log(r_t(y)))$.

The $\watermark$ stage for the Gumbel-watermark essentially replaces $\mathrm{Uniform}([0,1])$ with a \emph{pseudo-random function} $r_t(y) = F_{y_{t-m:t-1},\wmk}(y)$. Given the secret key $\wmk$, the \emph{pseudo-random function} is a deterministic function with range $[0,1]^{\cV}$, but over the distribution of the secret key $\wmk$, $r_t$ is computationally indistinguishable from sampled from truly i.i.d. uniform distribution, which ensures that the distribution of $y_t$ in the watermarked model is computationally indistinguishable to the unwatermarked distribution \eqref{eq:sampling}.  

At $\detect$ phase of the  the Gumbel watermark, the auditor who has access to the key $\wmk$ may compute
$$
\mathrm{TestScore}_{\mathrm{Gumbel}}(y_{1:n})  = \sum_{t=m+1}^n  -\log(1-r_{t}(y_t)).
$$
If $y_{1:n}$ is \emph{not} generated from the watermarked model, then the test statistic is a sum of exponential random variable thus $\E[\mathrm{TestScore}(y_{1:n})] = n - m.$  
Meanwhile, if $y_{1:n}$ is generated by the Gumbel watermarked model, 
\begin{align}
&\E[\mathrm{TestScore}(y_{1:n})] = \sum_{t=m+1}^n\E \left[ \sum_{y\in \cV} p_t(y) H_{ \frac{1}{p_t(y)}}\right] \label{eq:aaronson_ts_mean} \\
&\geq (n-m) + \left(\frac{\pi^2}{6}-1\right) \sum_{t=m+1}^n\E\left[\textrm{Entropy}[p_t(\cdot)]\right]. \label{eq:aarons_ts_entropy}
\end{align}
where $p_t:=\mathrm{Softmax}(\logit_t/T)$, $H_\alpha := \int_{0}^1 \frac{1-x^\alpha}{1-x} dx$ is Euler's Harmonic number and $\textrm{Entropy}$ denotes the standard Shannon entropy (in nats) for a discrete distribution, i.e., $\textrm{Entropy}[p] = -\sum_{y\in\cV} p(y) \log p(y)$.  The above results were described in \citet{aaronson}'s talk but no detailed proofs were published to the best of our knowledge. To be self-contained, we added formal statements and proofs about the Gumbel watermarks in Appendix~\ref{app:gumbelwm} (see Theorem~\ref{thm:gumbel_test_score}~and~\ref{thm:aaronson_entropy}).


\paragraph{Permute-and-Flip as ReportNoisyMax.} It turns out that the Permute-and-Flip sampling has a similar equivalent Report-Noisy-Max form.  Instead of Gumbel noise, it is the exponential noise that are added to the logits. This less-known fact is due to \citet{ding2021permute}
\begin{fact}[{\citealp[Theorem~5]{ding2021permute}}]\label{fact:PF}
Permute-and-Flip Sampling in Algorithm~\ref{alg:PF_decoding} with parameter $T$ is equivalent to 
\begin{equation}
    y_t =  \argmax_{y\in\cV}  \frac{\logit_t(y)}{T} +  E_t(y).
\end{equation}
where $E_t(y) \sim \textrm{Exponential}(1)$ i.i.d. for each $t,y$.
\end{fact}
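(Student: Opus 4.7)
The plan is to establish Fact~\ref{fact:PF} by computing, in closed form, the marginal probability $\Pr[y_t = y]$ for each $y \in \cV$ under both procedures and verifying that the two expressions agree. Throughout I will normalize by subtracting $\logit_t^*$ from every logit, which leaves both algorithms unchanged, and write $p(y) := \exp((\logit_t(y) - \logit_t^*)/T) \in (0, 1]$, so that $p(y^*) = 1$; this $p(y)$ is exactly the Bernoulli bias on line~7 of Algorithm~\ref{alg:PF_decoding}.

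On the PF side, I would decompose on the subset $S \subseteq \cV \setminus \{y\}$ of tokens that appear before $y$ in the random permutation. The event ``$y_t = y$'' factors as (elements of $S$ occupy the first $|S|$ positions in some order) $\times$ ($y$ occupies position $|S|+1$) $\times$ (every token in $S$ is rejected) $\times$ ($y$ is accepted), yielding
\begin{equation*}
\Pr\nolimits_{\mathrm{PF}}[y_t = y] \;=\; p(y) \sum_{S \subseteq \cV \setminus \{y\}} \frac{|S|!\,(N - |S| - 1)!}{N!} \prod_{y' \in S}(1 - p(y')),
\end{equation*}
with $N := |\cV|$. Rewriting the combinatorial coefficient as the Beta integral $\int_0^1 t^{|S|}(1-t)^{N-|S|-1} dt$ and then applying the product identity $\sum_{S} \prod_{y' \in S} a(y') \prod_{y' \notin S \cup \{y\}} b(y') = \prod_{y' \neq y}(a(y') + b(y'))$ with $a(y') = t(1-p(y'))$ and $b(y') = 1-t$, the subset sum collapses to
\begin{equation*}
\Pr\nolimits_{\mathrm{PF}}[y_t = y] \;=\; p(y) \int_0^1 \prod_{y' \neq y}\bigl(1 - t\, p(y')\bigr)\, dt.
\end{equation*}

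On the Report-Noisy-Max side, I would exploit that $U(y) := e^{-E(y)} \sim \mathrm{Uniform}(0,1)$ i.i.d., so that ``$y = \argmax_{y'} \logit_t(y')/T + E_t(y')$'' is equivalent to ``$U(y)/p(y) < U(y')/p(y')$ for every $y' \neq y$''. Conditioning on $U(y) = u$ and using independence,
\begin{equation*}
\Pr\nolimits_{\mathrm{RNM}}[y_t = y] \;=\; \int_0^1 \prod_{y' \neq y} \max\!\bigl(0,\, 1 - p(y')\,u/p(y)\bigr)\, du,
\end{equation*}
and the substitution $t = u/p(y)$ rescales this to $p(y) \int_0^{1/p(y)} \prod_{y' \neq y} \max(0,\, 1 - t\, p(y'))\, dt$. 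The factor indexed by $y^*$ forces the integrand to vanish on $(1,\, 1/p(y)]$ whenever $y \neq y^*$, while on $[0, 1]$ every factor is already nonnegative since $p(y') \leq 1$, so the $\max$ may be dropped. The resulting expression coincides with the PF formula, proving equality of the two output distributions token-by-token; since the same argument applies independently at each step $t$, the full trajectory laws agree.

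The main obstacle is the first step: converting the sum over permutation-prefixes into a clean integral. The Beta-integral representation combined with the product-over-factors identity is the non-obvious move, and one has to keep careful track of the indexing ($S \subseteq \cV \setminus \{y\}$) and of the overall factor $p(y)$ versus the $1 - tp(y')$ factors inside the product. By comparison, the RNM calculation is essentially routine once one notices that $e^{-\mathrm{Exp}(1)} = \mathrm{Uniform}(0,1)$: a change of variables and a brief nonnegativity check then close the argument.
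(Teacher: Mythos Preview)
Your proof is correct. The paper does not give its own proof of Fact~\ref{fact:PF}; it simply cites \citet[Theorem~5]{ding2021permute}. The only overlap is that the paper's appendix later computes the Report-Noisy-Max selection probability in \eqref{eq:probs_PF}, which is your RNM integral before the substitution $t = u/p(y)$ (their integration variable $r_w$ is your $u$; after that change of variable their expression becomes your $p(y)\int_0^1\prod_{y'\neq y}(1-t\,p(y'))\,dt$). What you add beyond the paper is a direct closed-form evaluation of the Permute-and-Flip probability itself: the Beta-integral rewriting of $|S|!\,(N-|S|-1)!/N!$ together with the subset-product expansion collapses the permutation-prefix sum to the very same integral, giving a self-contained proof of the equivalence that the paper simply defers to the cited reference.
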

Leveraging this fact, in the remainder of the section, we develop a watermarking scheme for ReportNoisyMax that is analogous to the Gumbel-watermark.

\paragraph{Permute-and-Flip watermark.}
The natural idea is to replace the exponential noise $E_t(y)$ with a pseudo-random version that depends on a secret key and a prefix with length $m$.  Observe that $\mathrm{Exponential}(1)\sim -\log(\mathrm{Uniform}([0,1]))$, thus the standard pseudo-random function that generates uniform random variables can be used.  In the detection phase, we compute:
$$
\mathrm{TestScore}_{\mathrm{PF}}(y_{1:n}) = \sum_{t=m+1}^n  -\log(r_{t}(y_t)).
$$
Note that this is a simple change of sign of $r_t(y_t)$ comparing to the test score of the Gumbel watermark.  Detailed pseudo-code for how the watermark works are given in Algorithm~\ref{alg:wm} and Algorithm~\ref{alg:detect}.


\begin{algorithm}[t]
\small
   \caption{PF watermarking: $\watermark$}\label{alg:wm}
\begin{algorithmic}[1]
    \STATE {\bfseries Preparation:}   Randomly sample a watermark key $\wmk$ 
   \STATE {\bfseries Input:} Prompt $x$, language model $\mathcal{M}$, pseudo-random function $F$, watermark key $\wmk$, temperature $T$
   \FOR{$t=1, 2, \cdots$}
   \STATE Compute logits: $\logit_t\leftarrow \mathcal{M}([x, y_{1:t-1}])$
   \STATE Generate a pseudo-random vector $r_t(\cdot)$ using
   $r_t(y) := F_{y_{t-m:t-1},\wmk}(y)$ for $y\in \cV$
   \STATE Select the next token $y_t$ using 
   \begin{equation}\label{eq:wm}
    y_t = \argmax_{y \in \mathcal{V}} \left( \frac{\logit_t(y)}{T} - \log {r_t(y)} \right)
    \end{equation}
   \ENDFOR
   \STATE {\bfseries Output:} Watermarked sequence $y = [y_1,...,y_n]$
\end{algorithmic}
\end{algorithm}
\begin{algorithm}[t]
\small
   \caption{PF watermarking: $\detect$}\label{alg:detect}
\begin{algorithmic}[1]
   \STATE {\bfseries Input:} Suspect text $y_{1:n}$, watermark key $\wmk$, pseudo-random function $F$, target false positive rate $\alpha$
   \STATE {\bfseries Output:} Binary decision (1 if text is watermarked, 0 otherwise)
   \STATE Calculate the cumulative score
   \begin{equation}\label{eq:z}
\mathrm{TestScore}_{\mathrm{PF}}(y_{1:n}) = \sum_{t=m+1}^n  -\log(r_{t}(y_t))
   \end{equation}
   where $r_t(y) = F_{y_{t-m:t-1},\wmk}(y)$
   \STATE {\bfseries if} $\mathrm{TestScore} > \mathrm{CDF}_{\mathrm{Gamma}(n-m, 1)}^{-1}(1-\alpha)$ {\bfseries then return}  1, i.e.,  ``The suspect text is watermarked.'' 
   \STATE {\bfseries else return} 0, i.e., ``The suspect text is not watermarked.'' 
\end{algorithmic}
\end{algorithm}

\begin{theorem}\label{thm:PF_watermark}
Assume the pseudo-randomness is perfect\footnote{This is a simplifying assumption. 
We only need $(n-m)|\cV|$-way independence.}, i.e., $F_{w_{1:m},\wmk}(y)\sim \mathrm{Uniform}([0,1])$ i.i.d. $\forall [w_{1:m},y]\in\cV^{m+1}$.

The following are true about PF watermark scheme.
\begin{enumerate}[leftmargin=*, itemsep=0pt, topsep=0pt]
    \item If $y_{1:n}$ is \emph{statistically independent} to the secret key $\wmk$, 
$$\E\left[\mathrm{TestScore}_{\mathrm{PF}}(y_{1:n}) \middle| y_{1:n}\right]  = n-m. $$
    \item If in addition, all $m$-grams in $y_{1:n}$ are unique, then conditioning on $y_{1:n}$,  
    $$\mathrm{TestScore}_{\mathrm{PF}}(y_{1:n}) \sim \mathrm{Gamma}(n-m, 1).$$
The choice $\tau= \mathrm{CDF}_{\mathrm{Gamma}(n-m, 1)}^{-1}(1-\alpha)$ ensures the false positive rate in Algorithm~\ref{alg:detect} is equal to $\alpha$.
    \item Assume $y_{1:n}$ is drawn from Algorithm~\ref{alg:wm}, then
    \begin{equation}
    \label{eq:PF_mean_under_H1}
    \begin{aligned}
    \scriptsize
    \text{$\E\left[\mathrm{TestScore}_{\mathrm{PF}}(y_{1:n})\right] 
    = \sum_{t=m+1}^n \E\left[\sum_{y\in\cV}\int_{0}^{e^{\logit_t(y) - \logit_t^*}} \bigg(-\log r  \cdot \prod_{y'\in \cV, y'\neq y}\left( 1 - r\cdot e^{\logit_t(y') - \logit_t(y)}\right) \bigg) d r \right]$.}
    \end{aligned}
    \end{equation}
\end{enumerate}
\end{theorem}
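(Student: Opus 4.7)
The three claims live on a spectrum of increasing effort: parts~1 and 2 are essentially accounting with exponential/gamma distributions under the idealized pseudorandomness assumption, while part~3 is the substantive calculation that exploits the Report-Noisy-Max representation of PF sampling (Fact~\ref{fact:PF}). For parts~1 and 2, the plan is to observe that for any fixed sequence $y_{1:n}$ independent of $\wmk$, each $r_t(y_t) = F_{y_{t-m:t-1},\wmk}(y_t)$ is, by the perfect pseudorandomness assumption, marginally $\mathrm{Unif}([0,1])$. Hence $-\log r_t(y_t) \sim \mathrm{Exponential}(1)$ with mean $1$, and linearity of expectation gives part~1 directly. For part~2, the additional ``unique $m$-grams'' assumption guarantees that the seeds $(y_{t-m:t-1},y_t)$ used to invoke $F$ are pairwise distinct across $t=m+1,\dots,n$, so the $-\log r_t(y_t)$ are i.i.d.\ $\mathrm{Exponential}(1)$ and their sum is $\mathrm{Gamma}(n-m,1)$. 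The false-positive-rate claim is then just the definition of the inverse CDF evaluated at $1-\alpha$.

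\textbf{Setup for part~3.} By Fact~\ref{fact:PF}, Algorithm~\ref{alg:wm} chooses $y_t = \argmax_{y\in\cV}(\logit_t(y)/T + E_t(y))$ with $E_t(y) = -\log r_t(y)$; since the arguments to $F$ for fixed $t$ and varying $y\in\cV$ are distinct, these $E_t(y)$ are i.i.d.\ $\mathrm{Exponential}(1)$ within each step. The core task is to evaluate $\E[-\log r_t(y_t)]$ for a single step $t$, after which summing over $t$ finishes the proof. I will decompose
\[
\E[-\log r_t(y_t)] \;=\; \sum_{y\in\cV} \E\!\left[-\log R_y \cdot \mathbf{1}\{y_t = y\}\right],
\]
where I abbreviate $R_{y'} := r_t(y')$. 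The plan is to condition on $R_y = r$ and compute the probability that $y$ wins the argmax against all the independent competitors $R_{y'}$.

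\textbf{Core computation.} Fix $y$ and condition on $R_y = r$. The event $\{y_t = y\}$ is equivalent to
\[
\logit_t(y) - \log R_y \;>\; \logit_t(y') - \log R_{y'} \quad \text{for all } y'\neq y,
\]
which rearranges to $R_{y'} > r \cdot e^{\logit_t(y') - \logit_t(y)}$. Since the $R_{y'}$ are independent $\mathrm{Unif}([0,1])$, each such event has probability $\max(1 - r\cdot e^{\logit_t(y')-\logit_t(y)},0)$, and independence across $y'$ yields
\[
\Pr[y_t = y \mid R_y = r] \;=\; \prod_{y'\neq y} \max\!\bigl(1 - r\cdot e^{\logit_t(y') - \logit_t(y)},\,0\bigr).
\]
The binding constraint comes from the maximizer $y^* = \argmax \logit_t$, which forces the probability to vanish once $r > e^{\logit_t(y) - \logit_t^*}$; hence the upper limit of integration in the claim. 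Integrating $(-\log r)$ against the uniform density of $R_y$ over $[0, e^{\logit_t(y)-\logit_t^*}]$ and summing over $y$ produces the inner expression of~\eqref{eq:PF_mean_under_H1}; taking expectation over the logits (which depend on the random prefix $y_{1:t-1}$) and summing over $t$ finishes the derivation.

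\textbf{Main obstacle.} Parts~1 and~2 are routine once the pseudorandomness assumption is invoked. The one subtlety in part~3 is carefully justifying the truncation of the product to the interval $[0, e^{\logit_t(y)-\logit_t^*}]$ and handling the fact that the $R_{y'}$ are independent of $R_y$ but the logits $\logit_t$ themselves depend on the random prefix $y_{1:t-1}$, which in turn depends on earlier draws of $F$. The clean way to handle this is to take all expectations conditional on the prefix (so $\logit_t$ is a constant at step $t$), perform the single-step calculation above, and then take an outer expectation over the prefix; the outer $\E[\cdot]$ in~\eqref{eq:PF_mean_under_H1} absorbs exactly this randomness.
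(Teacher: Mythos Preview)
Your proposal is correct and follows essentially the same route as the paper's proof. Parts~1 and~2 are handled identically (independence of $y_{1:n}$ from $\wmk$ makes each $-\log r_t(y_t)$ marginally $\mathrm{Exponential}(1)$; unique $m$-gram prefixes make them i.i.d.), and for part~3 the paper performs exactly your decomposition $\E[-\log r_t(y_t)] = \sum_{y}\E[-\log R_y\cdot\mathbf{1}\{y_t=y\}]$, writes the same conditional winning probability $\prod_{y'\neq y}(1 - r\,e^{\logit_t(y')-\logit_t(y)})_+$, and truncates the integral at $e^{\logit_t(y)-\logit_t^*}$ for the same reason you give.
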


The above expression in \eqref{eq:PF_mean_under_H1} may appear messy, but it is the exact calculation and captures the entropy of the distribution PF-induces for a given $\logit_t$. To see this, let us instantiate the bound for two special cases that admit more explicit forms. 
\begin{example}\label{ex:uniform_k_subset}
 When $\mathrm{Softmax}(\logit_t)$ is $1/k$ for an arbitrary subset of $k$ tokens and $0$ for others, 
$$
\E[-\log(r_{t}(y_t))] :=  H_{k} =  1 + 1/2 +...+1/k \approx \log k.
$$
Specifically, when  $k = |\cV|$ this is the uniform distribution, \eqref{eq:PF_mean_under_H1} $\asymp n\log|\cV|$ while when $k=1$, the sequence is completely deterministic (e.g., when the LLM is asked to recite the ``Declaration of Independence''), then we get \eqref{eq:PF_mean_under_H1} $= n-m$ as expected.
\end{example}
In the above example, \eqref{eq:PF_mean_under_H1} is identical to the expected TestScore of the Gumbel watermark in \eqref{eq:aaronson_ts_mean}. 
This is because the distributions they sample from are also the same. To illustrate their difference, let us revisit the simple two-token case from Example~\ref{ex:twotoken} again for which we can work out the expectation of the test score explicitly. 
\begin{example}\label{ex:twotoken_watermark}
     Let the $|\cV|=2$ and the corresponding logits be $[\Delta, 0]$. The expected TestScore of the Gumbel and the PF watermark (for each watermarked token) are:
     $\frac{H_{1+e^{-\Delta/T}}}{1+e^{-\Delta/T}} + \frac{H_{1+e^{\Delta/T}}}{1+e^{\Delta/T}}$
     and $1+\frac{1}{2}e^{-\Delta/T}(1+\Delta/T)$ respectively, where $H_x$ is the $x^{th}$ Harmonic number.
\end{example}
It is a bit hard to compare them by reading the mathematical expressions, so let us compare them numerically 
(see Figure~\ref{fig:detectability_fixed_stability}). The vertical axis in the figures measures \emph{Detectability}, which we define to be the expected difference between the TestScore of a watermarked and unwatermarked token. Since under the null the $\E[-\log(r_t(y_t))]=E[-\log(1-r_t(y_t))] = 1$, we can simply subtract $1$ from the expressions in Example~\ref{ex:twotoken_watermark}.

\begin{figure}[t]
    \centering
\begin{subfigure}[t]{0.47\textwidth}
\includegraphics[width=0.49\linewidth]{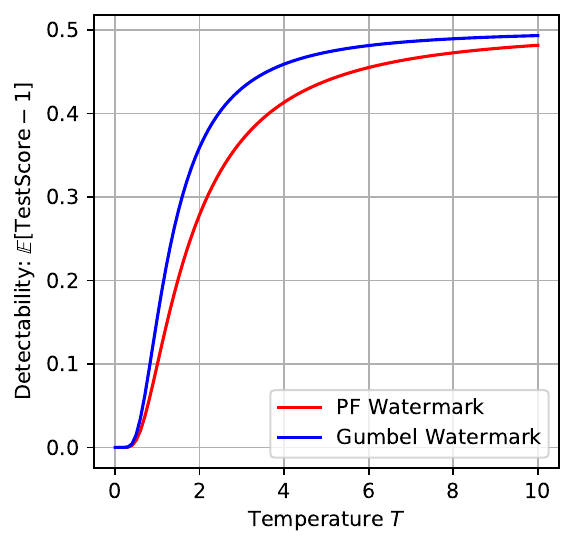}
\includegraphics[width=0.49\linewidth]{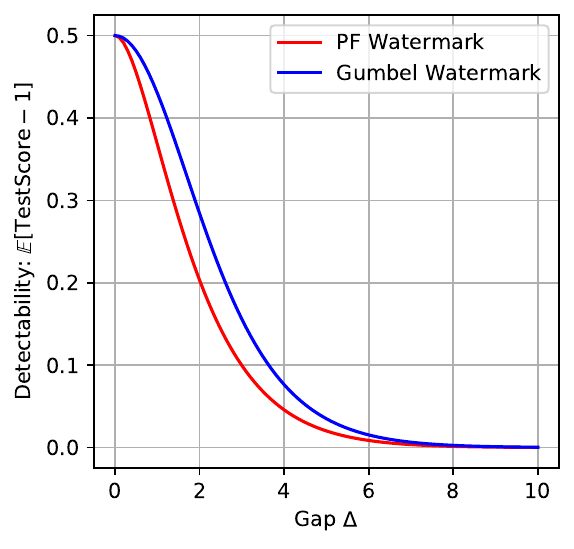}
    \caption{Comparing the \emph{detectability} of PF watermark vs Gumbel watermark  using Example~\ref{ex:twotoken_watermark}. On the left, we fix the Gap $\Delta=3.0$ and vary $T$.  On the right, we fix $T=1.0$ and vary $\Delta$. Gumbel watermark offers higher detectability when $T$ is the same. \textbf{Very importantly} this does \emph{not} mean Gumbel outperforms PF because PF is more greedy and has less entropy. }\label{fig:detectability_fixed_stability}
\end{subfigure}
\hfill
\begin{subfigure}[t]{0.49\textwidth}
    \includegraphics[width=0.47\linewidth]{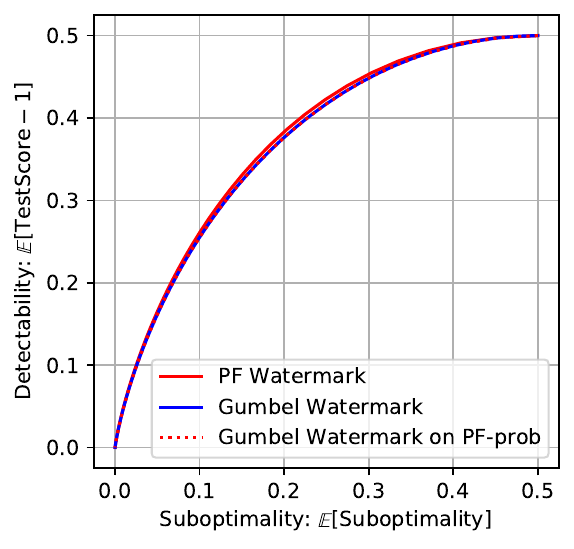}
    \includegraphics[width=0.507\linewidth]{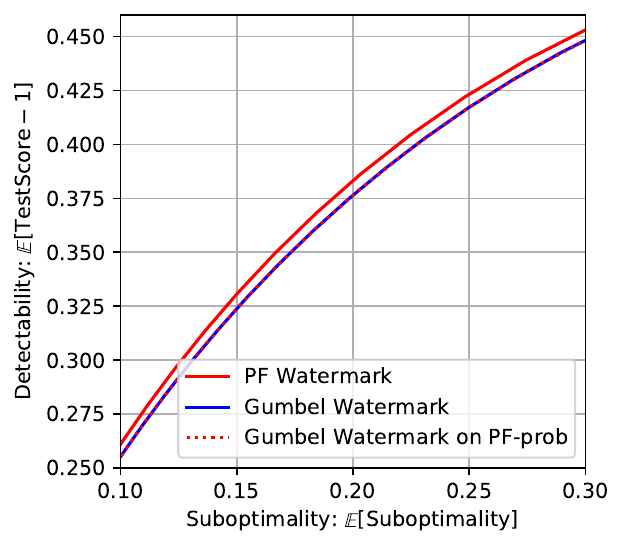}
    \caption{Comparing the \emph{detectability}-\emph{greediness} tradeoff of PF watermark vs Gumbel watermark in the two token case. The Gap $\Delta=3.0$, both curves are traced out by varying the temperature $T$ -- with a ``zoomed-in'' view on the RHS. It is clear from the figure that PF outperforms Gumbel on the tradeoff curve.}\label{fig:detectability_vs_suboptimality}
\end{subfigure}
\caption{Comparing the \emph{detectability} of PF watermark vs Gumbel watermark  using Example~\ref{ex:twotoken_watermark}.}
\end{figure}


Figure~\ref{fig:detectability_fixed_stability} indicates the PF watermark does not beat the Gumbel watermark in terms of \emph{detectability} when $T$ is fixed. This should not be surprising since for the same temperature, PF watermark is better at optimizing (recall Example~\ref{ex:twotoken} and Figure~\ref{fig:suboptimality_fixed_temprature}), thus naturally the resulting distribution has less entropy to be exploited by the watermarking scheme. 

A more fair comparison, would be to increase the temperature for PF watermark appropriately so we compare \emph{detectability} when the \emph{suboptimality} is aligned. This is shown in Figure~\ref{fig:detectability_vs_suboptimality}. In fact we have added a second baseline that apply Gumbel watermark to the \emph{induced sampling distribution from PF-decoding} (shown as the dotted line).
The distribution induced by PF does not have a simple form, but in our special case, it was worked out in Example~\ref{ex:twotoken}.    

As we can see in Figure~\ref{fig:detectability_vs_suboptimality}, the PF watermark is never worse and even has a slight advantage in the middle. To say it differently, to achieve the same suboptimality, the PF watermark can afford to use a larger temperature, which not only improves the stability parameter but also compensates it sufficiently on the detectability front to outperform the Gumbel watermark. In practice, we expect PF watermark to be as effective as the Gumbel watermark, and could even be a bit better (if the temperature parameter is chosen appropriately).


In conclusion, we showed that the watermarked version of PF-decoder is computationally indistinguishable from the original version of PF-decoder.   Meanwhile,  the test score of the PF watermark is qualitatively similar to that of the Gumbel-watermark (and identical in some cases). It is likely to produce similar detectability to the Gumbel watermark, while enjoying the performance boost that comes from replacing softmax sampling with PF.

\section{Experiments}
\label{sec:exp}

\paragraph{Datasets and models.} We utilize two long-form text datasets in our experiments: the Colossal Clean Crawled Corpus (C4) dataset \citep{raffel2020exploring} for open-ended text completion generation, and the Alpaca dataset \citep{alpaca} for question-answering tasks. Our primary language model is the state-of-the-art open-source model Llama-2 with 7 billion parameters. Specifically, we use the Llama-2-7B-chat model for question-answering tasks on the Alpaca dataset. For text completion tasks on the C4 dataset, we employ the base model Llama-2-7B. Furthermore, to evaluate the universal applicability of smaller models, we also assess the performance of the TinyLlama-1.1B model\footnote{\url{https://huggingface.co/TinyLlama/TinyLlama-1.1B-Chat-v1.0}}  \citep{zhang2024tinyllama}.

\paragraph{Evaluation metrics.} 
We calculate perplexity scores from different models, using Llama2-7B to compute PPL1 and Llama2-13B to compute PPL2. We also compute MAUVE scores to measure the distributional similarity between model generations and human text as another metric for text quality \citep{pillutla2021mauve}. To evaluate repetitiveness, we compute seq-rep-5 across generations, which is the average repetition rate of duplicate 5-grams in a sequence \citep{welleck2019neural}. 
For the watermark evaluation, maintaining a low false positive rate is crucial to avoid misclassifying unwatermarked text as watermarked. Therefore, we set the false positive rates at 1\% and 10\% for all watermark detection algorithms, adjusting the detection threshold accordingly. We report true positive rate (TPR) and F1 scores to measure the watermark detectability. We compared the well-known Gumbel Watermark (Gumbel WM) and Green-Red Watermark (KGW WM) as our main baselines. Experiments were conducted using Nvidia A6000 GPUs. For the details of the experiment setting, please refer to the Appendix \ref{sec:app_exp}.

\begin{table}[t]
\centering
\setlength{\tabcolsep}{3.5pt}
\caption{Text generation results for different methods. The true positive rate (TPR) is calculated under 0.01 false positive rate (FPR). PPL1 refers to perplexity measured by Llama2-7B models. PPL2 is perplexity calculated by the Llama2-13B model. For general text generation, PF decoding produces significantly lower perplexity compared to sampling. For watermarking methods, PF watermark also produces lower perplexity compared to KGW watermark and Gumbel watermark.}
\label{tab:main_table}
\resizebox{\linewidth}{!}{
\begin{tabular}{lccrrcc|lccrrcc}
\toprule
\textbf{Method} & \textbf{AUC}$\uparrow$ & \textbf{TPR}$\uparrow$ & \multicolumn{1}{c}{\textbf{PPL1}$\downarrow$} & \multicolumn{1}{c}{\textbf{PPL2}$\downarrow$}& \textbf{Seq-rep-5}$\downarrow$ & \textbf{MAUVE}$\uparrow$ & \textbf{Method} & \textbf{AUC}$\uparrow$ & \textbf{TPR}$\uparrow$ & \multicolumn{1}{c}{\textbf{PPL1}$\downarrow$} & \multicolumn{1}{c}{\textbf{PPL2}$\downarrow$}& \textbf{Seq-rep-5}$\downarrow$ & \textbf{MAUVE}$\uparrow$ \\
\midrule
\multicolumn{7}{l}{\textbf{C4, T=1.0, Llama2-7B}} & \multicolumn{7}{l}{\textbf{C4, T=0.8, Llama2-7B}}\\
~~~Greedy & - & - & \fullres{1.14}{0.01} & \fullres{1.24}{0.03} & 0.56 & 0.05 & ~~~Greedy & - & - & \fullres{1.28}{0.02} & \fullres{1.75}{0.03} & 0.12 & 0.93 \\
~~~Sampling & - & - & \fullres{12.47}{0.32} & \fullres{15.31}{0.41} & 0.02 & 0.98 & ~~~Sampling &  - & - & \fullres{4.23}{0.06} & \fullres{4.91}{0.08} & 0.06 & 1.00 \\
~~~PF & - & - & \fullres{8.94}{0.20} & \fullres{10.75}{0.25} & 0.03 & 0.90 & ~~~PF     & - & - & \fullres{3.54}{0.06} & \fullres{4.11}{0.08} & 0.10 & 0.92 \\
~~~KGW WM & 0.989 & 0.991 & \fullres{16.62}{0.38} & \fullres{20.62}{0.49} & 0.01 & 1.00 & ~~~KGW WM & 0.995 & 0.991 & \fullres{5.78}{0.08} & \fullres{6.77}{0.11} & 0.03 & 0.99\\
~~~Gumbel WM & 0.997 & 0.988 & \fullres{11.41}{0.27}& \fullres{14.12}{0.36} & 0.04 & 0.93 & ~~~Gumbel WM & 0.995 & 0.982 & \fullres{4.03}{0.07} & \fullres{4.71}{0.09} & 0.10 & 1.00 \\
~~~PF WM & 0.995 & 0.984 & \fullres{8.33}{0.20} & \fullres{10.28}{0.29} & 0.05 & 0.99 & ~~~PF WM & 0.993 & 0.980 & \fullres{3.38}{0.07} & \fullres{3.99}{0.10}& 0.13 & 1.00 \\
\midrule
\multicolumn{7}{l}{\textbf{Alpaca, T=1.0, Llama2-7B-Chat}} & \multicolumn{7}{l}{\textbf{Alpaca, T=1.0, TinyLlama-1.1B-Chat}}\\
~~~Greedy & - & - & \fullres{1.28}{0.02} & \fullres{1.75}{0.03} & 0.12 & 0.93 & ~~~Greedy &  - & - & \fullres{1.41}{0.01} & \fullres{1.66}{0.02} & 0.30 & 0.99 \\  
~~~Sampling &  - & - & \fullres{1.74}{0.02} & \fullres{2.41}{0.04} & 0.09 & 0.86 & ~~~Sampling &  - & - & \fullres{2.73}{0.04} & \fullres{3.71}{0.06} & 0.11 & 1.00 \\
~~~PF &  - & - & \fullres{1.65}{0.02} & \fullres{2.30}{0.04}& 0.09 & 0.98 & ~~~PF & - & - & \fullres{2.53}{0.03} & \fullres{3.44}{0.06} & 0.12 & 0.98 \\
~~~KGW WM & 0.961 & 0.596 & \fullres{2.20}{0.04} & \fullres{3.00}{0.06}& 0.08 & 0.93 & ~~~KGW WM & 0.998 & 0.991 & \fullres{3.81}{0.06} & \fullres{5.28}{0.09} & 0.07 & 0.99\\
~~~Gumbel WM & 0.986 & 0.858 & \fullres{1.70}{0.02} & \fullres{2.35}{0.04} & 0.10 & 0.93 & ~~~Gumbel WM & 1.000 & 0.995 & \fullres{2.67}{0.04} & \fullres{3.58}{0.06} & 0.12 & 1.00 \\
~~~PF WM & 0.979 & 0.810 & \fullres{1.69}{0.03} & \fullres{2.37}{0.04} & 0.10 & 1.00 & ~~~PF WM & 0.999 & 0.986  & \fullres{2.36}{0.04} & \fullres{3.15}{0.07} & 0.14 & 0.94 \\
\bottomrule
\end{tabular}
}
\end{table}

\begin{figure}[t]
    \centering
\begin{subfigure}[t]{0.49\textwidth}
    \includegraphics[width=0.95\linewidth]{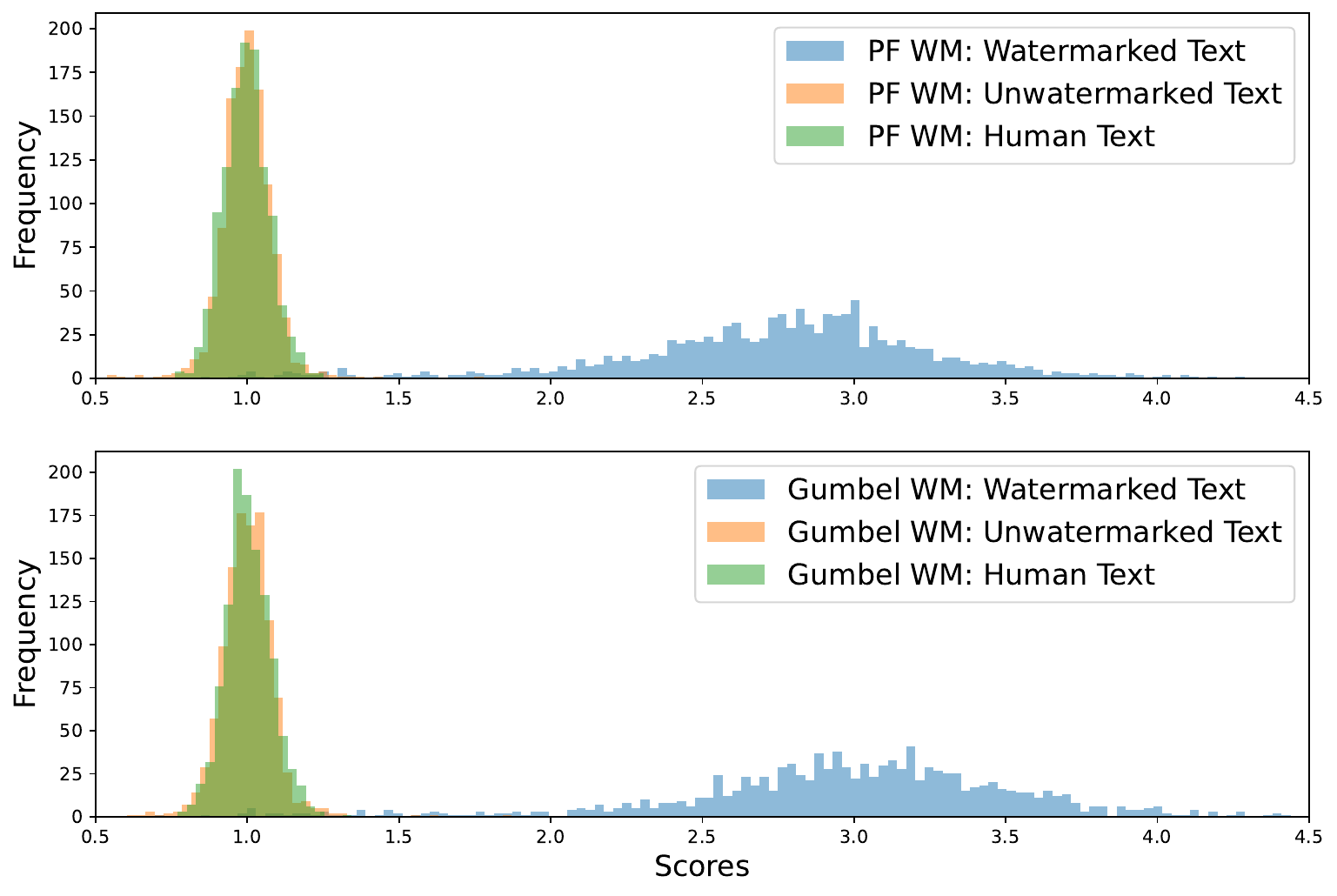}
    \caption{TestScore distribution. We calculate the average TestScore of the PF watermark and Gumbel watermark using Llama2-7B (T=1.0) on the C4 dataset. The length of the suspect texts is fixed at 200 tokens. A clear gap emerges between positive samples (watermarked) and negative samples (unwatermarked and human-written), indicating the watermark detectability.}
    \label{fig:score_dis}
\end{subfigure}
\hfill
\begin{subfigure}[t]{0.49\textwidth}
    \centering
    \includegraphics[width=0.95\linewidth]{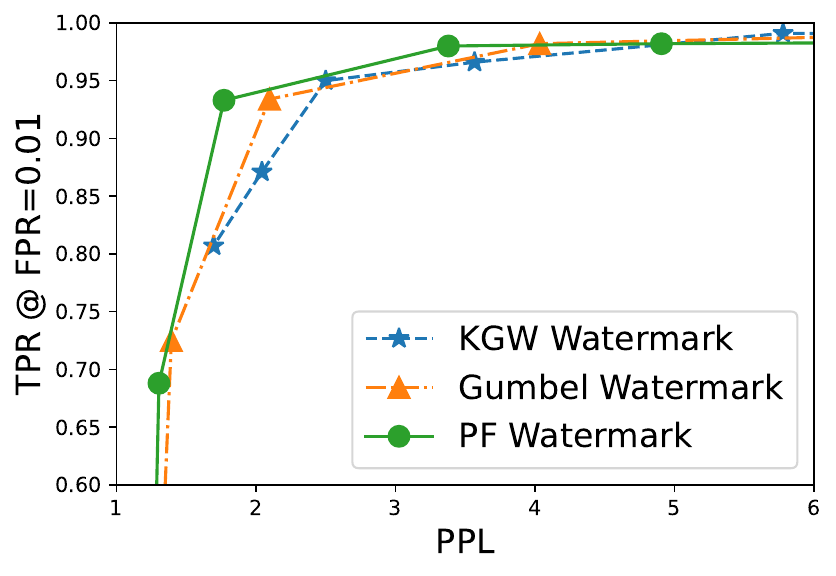}
    \caption{Trade-off between detection accuracy (TPR at FPR=0.01) and text quality (PPL) across three watermark configurations on the C4 dataset, with temperature settings ranging from 0.2 to 1.0. Each data point represents the outcome for 500 watermarked texts. The PF watermark achieves the optimal balance of the highest detection accuracy and lowest perplexity.}    \label{fig:tradeoff}
\end{subfigure}
\caption{Comparison of PF and Gumbel watermarks on real data.}
\end{figure}

\paragraph{Text generation performance.}
Table \ref{tab:main_table} shows the text perplexity of generated samples from different LLMs evaluated on two datasets. Using the same temperature, we find that PF decoding produces significantly lower perplexity compared to sampling. Although greedy decoding has the lowest perplexity, it suffers from heavy repetition, as indicated by its high seq-rep-5 score and low MAUVE score. We observe that for question-answering tasks, the perplexity is lower, likely due to the fixed form of answers and lower entropy of the text generation. Table \ref{tab:examples1} shows an example prompt and responses generated by different decoding methods.


\paragraph{Watermarking results.}
We compare the results of PF watermark with those of the Gumbel Watermark (Gumbel WM) and the Green-Red watermark (KGW WM). In Figure \ref{fig:score_dis}, we present the distribution of detection scores for the PF watermark. The PF watermark demonstrates clear detectability between positive and negative samples. The results of the watermark generation are shown in Table \ref{tab:main_table} and Figure \ref{fig:tradeoff}. The PF watermark achieves the best balance of the highest detection accuracy and lowest perplexity, compared to the KGW WM and the Gumbel WM. Notably, the perplexity of the PF watermark is close to that of the PF sampling, indicating that the watermarking process does not significantly impact the quality of the generated text. All watermarking methods achieved near-perfect detection accuracy on the C4 dataset. Besides, the detection results for the small TinyLlama model are also good, demonstrating the universal applicability of the PF watermark.


\begin{figure}[ht]
\centering
\includegraphics[width=0.55\linewidth]{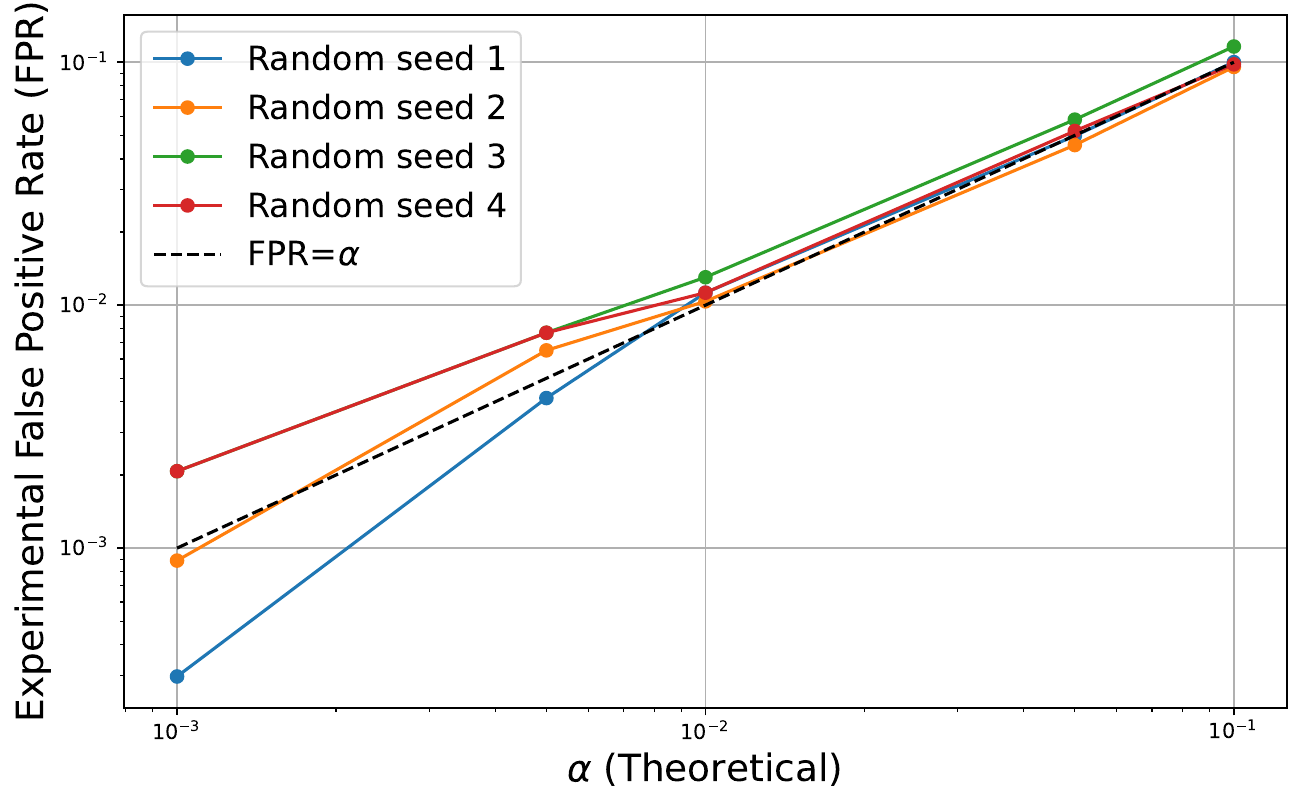}
\vspace{-0.5em}
    \caption{Comparison of empirical and theoretical false positive rates with different watermark keys. We can see that the second statement of Theorem 4.3 correctly controls the Type I error in practice.}
    \label{fig:type1}
\end{figure}
\paragraph{Controlling the false positive rate.}
The key strength of PF watermark is its ability to precisely control the false positive rate (FPR) during detection. We validate this by conducting experiments using negative examples from diverse datasets (C4, Alpaca, unwatermarked) and different random keys. As Figure \ref{fig:type1} shows, the empirical false positive rates align tightly with the theoretical $\alpha$ values across different settings. This demonstrates PF watermark's effectiveness in precisely controlling the FPR as intended.

\paragraph{Additional watermarking results.} For a text watermarking design to be effective, it should be able to withstand paraphrasing attacks that an adversary may attempt to modify the watermarked text. Furthermore, the watermark should be detectable even with shorter text lengths. In Appendices \ref{sec:robust}.1 and \ref{sec:length}.2, we present additional empirical results for the PF watermark, demonstrating its robustness to paraphrasing and editing attacks. The results also show that the PF watermark can still be detected even when the length of the text is reduced to only 30 tokens.

\section{Conclusion}
\label{sec:dis}

We introduce Permute-and-Flip (PF) decoding, a new decoding method for large language models that enjoys the same -- perturbation-stability guarantees as softmax sampling while achieving substantially lower perplexity. We design a tailored watermarking scheme (PF watermark) for PF decoding that enables precise control over false positive rates while retaining high true positive rates. Our experiments demonstrate that the PF watermark achieves the best balance of the highest detection accuracy and lowest perplexity. All these intriguing properties make PF decoding a promising new approach for practical applications of large language models.


\bibliography{custom}
\bibliographystyle{iclr2025_conference}

\appendix
\newpage
\section{More on Related Work}
\label{sec:related}

\subsection{Language Model Decoding.}
The decoding strategy used in text generation greatly impacts the resulting text's quality and diversity. Traditional deterministic algorithms, like greedy decoding and beam search, often lead to repetitive text \citep{wiher2022decoding}. To address this, diverse beam search (DBS) \citep{vijayakumar2016diverse} has been developed to promote diversity in text generation. Stochastic decoding strategies, such as Top-$k$ and Top-$p$ (Nucleus) \citep{holtzman2019curious} sampling, balance randomness and determinism, selecting from the most likely tokens to enhance variety while maintaining coherence. The Bayes Minimum Risk (MBR) method minimizes expected risk and incorporates a utility function to navigate trade-offs between text attributes. Advanced techniques have been developed to improve decoding for large language models, including the imposition of constraints \citep{anderson2016guided, qin2020backpropagation, hokamp2017lexically, lu2021neurologic}, enhancing text quality \citep{li2022contrastive}, and speeding up the decoding process \citep{chen2023accelerating}. 

Our contributions are complementary to these existing methods in that we are the first to introduce a rigorous stability definition and study the tradeoff between utility (e.g. perplexity) and stability. Permute-and-flip sampling can be used as a drop-in replacement for softmax sampling whenever it is used, e.g., in standard full sampling or nucleus (Top-p) sampling. We also provide watermarking capabilities for PF-decoder. We believe that the PF decoder has the potential to become a promising new approach for language model decoding.


\subsection{Detect AI-generated Text}

Another major motivation of the work is to come up with a reliable method for detecting AI-generated text, so as to prevent LLM misuse.  We briefly review two categories of existing work on this problem.

\paragraph{Post-hoc detection.}
Post-hoc detection of LLM-generated text encompasses two main approaches: zero-shot detection and training-based detection. Zero-shot detection is characterized by its capacity to identify AI-generated text without needing specific training data, leveraging the inherent stylistic differences between human and machine writing. Techniques within this category, such as DetectGPT \citep{Mitchell2023DetectGPTZM}, PHD \citep{Tulchinskii2023IntrinsicDE}, DNA-GPT \citep{Yang2023DNAGPTDN}, and Fast-DetectGPT \citep{Bao2023FastDetectGPTEZ}, utilize metrics like log-probability scores, n-gram frequencies, lower intrinsic dimensionality, and conditional probability to differentiate AI-generated content. In contrast, training-based detection involves fine-tuning pre-trained language models on datasets that consist of both human and LLM-generated texts to build a classifier. This method is exemplified by various systems, including commercial detection platforms \citep{OpenAI2023Detect, GPTZero, ZeroGPT}, and research projects \citep{Chen2023GPTSentinelDH,Yu2023GPTPT,Liu2022CoCoCM, Hu2023RADARRA}, which leverage the capabilities of large language models to effectively classify text origins. However, despite post-hoc detection's effectiveness in many cases, recent studies show detection methods' robustness is limited across different scenarios. They have proven fragile to adversarial attacks and biased against non-native English writers \citep{Wolff2020AttackingNT, Sadasivan2023CanAT, Liang2023GPTDA, Shi2023RedTL}. Limitations in accuracy even led OpenAI to close their detector in July 2023 \citep{OpenAI2023Detect}.

\paragraph{LLM watermarking.}
The watermarking approach provides a direct solution for AI text detection by intentionally embedding detectable signals or ``watermarks'' within the text. Unlike post-hoc detection, watermarking aims to determine if the text originates from a specific language model and it is robust to distribution shifts. Evolving from earlier techniques such as synonym substitution \citep{Topkara2006TheHV} and syntactic restructuring \citep{Atallah2001NaturalLW}, modern watermarking strategies involve integrating watermarks into the decoding process of language models \citep{Zhao2023ProtectingLG,kirchenbauer2023watermark}.
\citet{aaronson} works with OpenAI to first develop a Gumbel watermark that uses a ``traceable'' pseudo-random softmax sampling when generating the next word. \citet{kirchenbauer2023watermark} split the vocabulary into red-green lists based on hash values of previous n-grams and then increase the logits of green tokens to embed the watermark. 
\citet{zhao2023provable} provides strong theoretical guarantees for the green-red watermarks and advocates the use of a consistent red-green list to enhance robustness to evasion attacks. \citet{christ2023undetectable, Hu2023UnbiasedWF,Kuditipudi2023RobustDW, Wu2023DiPmarkAS} study watermarks that preserve the original token probability distributions. Meanwhile, multi-bit watermarks \citep{Yoo2023RobustMN, Fernandez2023ThreeBT} have been proposed to embed more complex information in the generation tasks.

PF-watermark is a newcomer to the family of LLM watermarks. It is closest to the Gumbel watermark \citep{aaronson} and enjoys all desirable properties of the Gumbel watermark. In Section~\ref{sec:method} we have thoroughly compared the two watermarks with theory and numerical simulation, demonstrating that PF-watermarks offer a slightly improved detectability-greedness tradeoff.  Comparisons under real-data experiments were also presented in Section~\ref{sec:exp}.  

Our results also have interesting implications for the green-red watermark \citep{kirchenbauer2023watermark}. For example, we can consider a PF-sampling version of green-red watermark which may perform better than the current green-red watermark in terms of its detectability-distortion tradeoff. Specifically, the stability guarantee of the PF-decoder (Theorem~\ref{thm:pf_sampling} Statement 1) implies that PF Green-Red Watermark enjoys the same quality guarantee as is analyzed in \citep[Theorem~3.1]{zhao2023provable} for $\alpha = \infty$,  but the more concentrated distribution might make the watermark more prominent / and more detectable for PF-sampling based Green-Red watermark. A full exploration of this idea will be presented in future work or a longer version of the current paper.

\subsection{Differential Privacy}
While the current work is not about privacy, our technical approach heavily relies on existing methods developed in the differential privacy literature. Specifically, the permute-and-flip sampling was developed by \citet{mckenna2020permute} as an alternative to the classical exponential mechanism \citep{mcsherry2007mechanism} for the problem of differentially private selection.  \citet{ding2021permute} shows that the PF sampling is equivalent to returning the argmax of a noisy version of the utility function perturbed by independent exponential noise.  Moreover, stability can be viewed as an algorithm-centric, input-specific version of pure-differential privacy \citep{dwork2006calibrating}. 

While some of the results we present are directly implied by existing work (e.g., Theorem~\ref{thm:pf_sampling}),  we believe it is a worthwhile (and hopefully beneficial) effort to introduce these results and their implications to the broader machine learning community. 

To our knowledge, we are the first to draw the connection between various versions of Report-Noisy-Max (RNM) mechanisms in differential privacy to the LLM watermarking problem. Besides Gumbel noise (Gumbel-watermark) and exponential noise (PF-watermark), there are other versions of RNM that add, e.g., Gaussian noise \citep[Corollary~9]{zhu2022adaptive}. We hope our work could inspire more interplay between these two research areas.

\section{More Discussion of Stability} \label{sec:dis_stable}
In this section, we delve deeper into the concept of stability as defined in Definition \ref{def:stability}.
\subsection{Stability and Its Implication for Diversity}
The definition of stability implies that the LLM decoder is resistant to small perturbations in the logits. Furthermore, \emph{stability} implies an intuitive notion of \emph{diversity}, which says that for tokens with similar logits, then their chances of getting chosen should be similar.  More rigorously:
\begin{remark}[Stability implies diversity]
If $|\logit_t(y)-\logit_t(y')|\leq \delta$, then we can construct a $\tilde{u}_t$ such that $\tilde{u}_t(y)=\tilde{u}_t(y')$ while satisfying $\|\logit_t-\tilde{u}_t\|_\infty\leq \frac{\delta}{2}$. Apply triangle inequality and Definition~\ref{def:stability}, we get
$$
\left|\log\frac{p_{\cA_{\logit_t}(y)}}{p_{\cA_{\logit_t}(y')}}\right| = \left|\log\frac{p_{\cA_{\logit_t}(y)}}{p_{\cA_{\tilde{u}_t}(y)}} +  \log\frac{p_{\cA_{\tilde{u}_t}(y')}}{p_{\cA_{\logit_t}(y')}}\right| \leq L\delta.
$$
\end{remark}

\subsection{Comparison of Decoding Methods in Terms of Stability}

Theorem \ref{thm:pf_sampling} establishes the Pareto-optimality of PF sampling with respect to the stability-perplexity tradeoff. This implies that no other decoding algorithm can surpass PF sampling in both stability and perplexity simultaneously.

Table \ref{tab:stable_comp} summarizes the stability guarantees of various decoding methods. Notably, only softmax sampling and PF sampling exhibit provable stability according to Definition \ref{def:stability}. This is in contrast to other popular methods like beam search, greedy search, and Top-$k$/Top-$p$ sampling, which lack such guarantees.

\begin{table}[t]
\centering
\caption{Comparison of stability and suboptimality across different sampling methods.} \label{tab:stable_comp}
\resizebox{\textwidth}{!}{
\begin{tabular}{p{3cm}p{3cm}p{3cm}p{4.2cm}}
\toprule
\textbf{Property} & \textbf{Softmax} & \textbf{PF} & \textbf{Beam/Greedy/Top-$k$/Top-$p$} \\
\midrule
Stability (Def 2.1) & $L = 2/T$ & $L = 2/T$ & $+\infty$ (Not stable for any parameter $L$) \\
Expected Suboptimality (Worst-case bound) & $T \log |V|$ & $T \log |V|$ & 0 for ``greedy''; $T \log k$ for ``Top-$k$'' \\
Expected Suboptimality (Per instance) & Exponentially higher log-likelihood & Between 1 - 2 smaller than Softmax & 0 for ``greedy''; Same as Softmax on ``Top-$k$'' \\
\bottomrule
\end{tabular}
}
\end{table}

In our definition, stability implies that small perturbations to logits should not significantly alter the output text distribution. This definition also constrains the degree of greediness that a decoder can exhibit, as small changes to logits may change the sorted order of the next token. Thus, it is clear why greedy decoding and Top-$k$/$p$ sampling cannot be stable.

When we say an algorithm $\cA$ cannot be stable, we mean that for any $L < +\infty$, there exist two decoding problems $u_t$ and $\tilde{u}_t$ such that $|u_t - \tilde{u}_t|_\infty \leq \delta$ for some $\delta > 0$, but the log-probability ratio between $A(u_t)$ and $A(\tilde{u}_t)$ exceeds $\delta L$.

To establish non-stability, we only need a counterexample. Below, we construct explicit counterexamples for each decoding method:
\begin{itemize}[leftmargin=*]
    \item Greedy: We have a vocabulary of two tokens. Let $u_t= [0, -1]$ and $\tilde{u}_t = [1,0]$. This satisfies that $|u_t-\tilde{u}_t| _{\infty} \leq \delta$ with $\delta=1$. But Greedy($u_t$) outputs the second token with probability 1, Greedy ($\tilde{u}_t$) outputs the second token with probability 0. Therefore, the importance ratio is unbounded, i.e., there isn't any finite $L$ that bounds the probability ratio.
    \item Top-$k$: Let’s consider $k=2$ as an example, and let vocabulary size be $3$. Let $u_t= [0, -1, -2]$ and $\tilde{u}_t = [0,-2,-1]$. Again, $|u_t - \tilde{u}_t| _\infty \leq \delta$ with $\delta=1$. Now, Top-2 sampling will never output the third token on $u_t$, while it will output the third token with probability $> 0$ on $\tilde{u}_t$. The importance ratio is unbounded.
    \item Top-$p$: Same example as in Top-$k$ above. The softmax of $u_t$ and $\tilde{u_t}$ are approximately $[0.844, 0.114, 0.042]$ and $[0.844, 0.042, 0.114]$ respectively. If we set $p=0.95$, then Top-$p$ with $u_t$ will exclude the third token, while that with $\tilde{u_t}$ will retain it — leading to an importance ratio of $\infty$.
    \item Beam Search: Since beam search with a constant score function reduces to greedy decoding, the same counter-example for greedy decoding demonstrates its non-stability.
\end{itemize}

Therefore, these counter-examples highlight that greedy, Top-$k$, Top-$p$, and beam search decoding are not stable according to our definition.

\subsection{The Connection Between Stability and Safety}

Stability can provide robustness against certain types of adversarial attacks, such as jailbreaking, within a gray-box threat model.

Consider a scenario where an attacker can alter the logits and receive responses from an API, as is common with OpenAI's logit bias feature\footnote{https://help.openai.com/en/articles/5247780-using-logit-bias-to-alter-token-probability-with-the-openai-api}. Recent research has shown that it is possible to jailbreak an LLM to encourage specific outputs like ``Sure'' or ``Definitely'' or other tokens \citep{Zhang2023OnTS, zhao2024weak}. By employing the PF decoding method, the API provider can make jailbreaking significantly harder while also making the output watermarkable, thereby enhancing the model's safety and security.

Stability serves as a fundamental guarantee against these perturbations. That said, all attacks will eventually manifest as perturbations to logits, and having a ``Lipschitz'' transformation from logits to token probabilities is a nice guarantee. Without this, even tiny changes to the logits can have a big influence on the final token probability distribution.

For instance, consider the difference between a provably stable decoder (such as softmax) and a non-stable one in a gray-box jailbreaking scenario. Suppose the language model has a vocabulary of just three words: ``Sure'', ``OK'', and ``Sorry''. Initially, the logits $u_t = [-2, -1, 0]$ indicate that the LLM is inclined to output ``Sorry''. If the adversary aims to increase the likelihood of outputting ``Sure'', they could perturb the logits to $\tilde{u}_t = [0, -1, 0]$, which satisfies $|u_t - \tilde{u}_t|_\infty \leq \delta$ with $\delta = 2$.

When applying Top-$k$ sampling with $k=2$ (as an example of several sampling methods that lack stability, such as Greedy, Top-$k$, Top-$p$, and Beam Search), the model would never select ``Sorry'', the third token, whereas with the original logits, ``Sorry'', would still have a non-zero probability of being selected. This demonstrates that Top-$k$ sampling lacks \emph{stability}. In contrast, PF sampling and softmax sampling maintain \emph{stability} under such perturbations.

\section{Additional Experiment Details} \label{sec:app_exp}

We provide additional details on the experiments here. We use the C4 \citep{raffel2020exploring} and Alpaca \citep{alpaca} datasets. Specifically, we select samples from C4 with text longer than 500 tokens, using the first 200 tokens as the prompt input to the language model and the next 300 tokens as the human-generated reference. This gives us 600 examples. For Alpaca, we select samples with prompts/instructions longer than 25 tokens and answers also longer than 25 tokens, giving 550 examples. Since Llama2-Chat is a fine-tuned version of Llama-2 optimized for dialogue, we use the Chat version (Llama-2-7B-Chat\footnote{\url{https://huggingface.co/meta-llama/Llama-2-7b-chat-hf}}) for the question-answering task and the base model (Llama-2-7B\footnote{\url{https://huggingface.co/meta-llama/Llama-2-7b-hf}}) for the text completion task.

Given that PF decoding can integrate with Top-$p$ sampling, which initially selects the top $p$ tokens before normalization, we conduct the performance tests using a $p=1.0$ for full sampling. The max generation length is set to 256 tokens for all experiments.

For perplexity calculation, we observe high variance with different methods, often influenced by outliers. To address this, we remove the top and bottom 3\% of perplexity scores as outliers and then calculate the average perplexity and standard error. For MAUVE scores, we use the human-written references from C4 and Alpaca as the human distribution.

For watermarking experiments, we generate 500 watermarked and 500 unwatermarked sentences per method. We label them as ``watermarked'' and ``unwatermarked'' respectively, with corresponding human-written text as ``human'' for each prompt. Following \citet{kirchenbauer2023watermark}, we use a watermark strength of $\delta = 2.0$ and green list ratio of $\gamma = 0.5$ for the KGW watermark. For fair comparison, we use the same long prefix as the pseudo-random function, hashing the previous m tokens to get the random vector for Gumbel/PF watermarks, or to split the green/red token lists. For the watermark robustness test (Table \ref{tab:robust}) we use a 4-token prefix, and an 8-token prefix elsewhere. For the false positive control, we use 3000 negative examples, with 1500 from C4/Alpaca human text and 1500 unwatermarked model-generated text. In our robustness testing, we evaluate two configurations of the DIPPER \citep{Krishna2023ParaphrasingED} model: DIPPER-1 with lexical diversity $L=40$, order diversity $O=40$, and DIPPER-2 with $L=40, O=100$.

\subsection{PF Watermark Robustness Results.}\label{sec:robust}

\begin{table}[htbp]
\centering
\caption{Detection results for three watermarking methods using Llama2-7B on the C4 dataset under different attacks.}
\label{tab:robust}
\begin{tabular}{clccccc}
\toprule
\multirow{2}{*}{\textbf{Setting}} & \multirow{2}{*}{\textbf{Method}} & \multirow{2}{*}{\textbf{AUC}} &\multicolumn{2}{c}{\textbf{1\% FPR}} & \multicolumn{2}{c}{\textbf{10\% FPR}} \\ 
& & & TPR & F1 & TPR & F1 \\ 
\midrule
\multirow{4}{*}{No attack} & KGW (m=4) &0.998 & 0.996 & 0.989 & 1.000 & 0.906 \\
& KGW (m=1) & 0.999 & 1.000 & 0.995 & 1.000 & 0.906 \\
& Gumbel & 0.992 & 0.979 & 0.979 & 0.986 & 0.913 \\
& PF&0.996 & 0.977 & 0.980 & 0.993 & 0.898\\ 
\midrule
\multirow{4}{*}{DIPPER-1} 
& KGW (m=4) & 0.661 & 0.057 & 0.105 & 0.317 & 0.416 \\
& KGW (m=1) & 0.876 & 0.389 & 0.554 & 0.717 & 0.720 \\
& Gumbel & 0.838 & 0.367 & 0.529 & 0.642 & 0.697 \\
& PF& 0.824 & 0.374 & 0.537 & 0.622 & 0.684  \\ \midrule
\multirow{4}{*}{DIPPER-2} 
& KGW (m=4)  & 0.638 & 0.051 & 0.096 & 0.278 & 0.375 \\
& KGW (m=1)  & 0.885 & 0.342 & 0.501 & 0.662 & 0.714 \\
& Gumbel & 0.764 & 0.239 & 0.380 & 0.523 & 0.608 \\
& PF & 0.795 & 0.250 & 0.394 & 0.544 & 0.625 \\
\midrule
& KGW (m=4)  &0.936 & 0.484 & 0.644 & 0.881 & 0.844  \\
Random 
& KGW (m=1) &0.956 & 0.752 & 0.836 & 0.923 & 0.839  \\
Delete (0.3) 
& Gumbel & 0.981 & 0.941 & 0.960 & 0.959 & 0.898 \\
 & PF&0.985 & 0.936 & 0.956 & 0.966 & 0.888\\ \bottomrule
\end{tabular}

\end{table}


To evaluate the robustness of the watermark detection, we test the PF watermark under paraphrasing and text editing attacks. Note that robustness here refers to the watermark's resilience against removal attacks, which is different from the logit stability in Definition \ref{def:stability}.

We employ various paraphrase attack techniques intended to remove the watermark text. The experiments are conducted with a 4-token prefix for the pseudorandom function. We also added the original soft watermarking method with a 1-token prefix for comparison ($m=1$ for KGW watermark \citep{kirchenbauer2023watermark}).  

In our experiments, we utilize two versions of the DIPPER paraphrasing model~\citep{Krishna2023ParaphrasingED}, denoted as DIPPER-1 and DIPPER-2. DIPPER-2 generates more diverse paraphrases than DIPPER-1. Moreover, we test a random word deletion attack, which is a common technique used to manipulate text. These experiments represent realistic scenarios where an adversary may attempt to remove watermarks through paraphrasing or editing. The results, shown in Table \ref{tab:robust}, illustrate the robustness of the PF watermark to these paraphrasing and editing attacks. The PF watermark achieves comparable detection performance to the Gumbel watermark and KGW watermark methods when using the same long prefix as the pseudorandom function.

\begin{table}[htbp]
\centering
\caption{PF watermark detection results with different lengths.}
\label{tab:diff_length}
\begin{tabular}{r|ccccc}
\toprule
\multirow{2}{*}{\textbf{Length}} & \multirow{2}{*}{\textbf{AUC}} &\multicolumn{2}{c}{\textbf{1\% FPR}} & \multicolumn{2}{c}{\textbf{10\% FPR}} \\
& & TPR & F1 & TPR & F1 \\ 
\midrule
200 & 0.994 & 0.977 & 0.978 & 0.985 & 0.915 \\
150 & 0.993 & 0.975 & 0.980 & 0.985 & 0.913 \\
100 & 0.992 & 0.970 & 0.972 & 0.983 & 0.911 \\
50 & 0.987 & 0.950 & 0.966 & 0.970 & 0.902 \\
30 & 0.980 & 0.923 & 0.950 & 0.953 & 0.888 \\
\bottomrule
\end{tabular}

\end{table}

\subsection{Impact of Text Length on Watermark Detection.} \label{sec:length}
Our watermarking method aims to be effective across texts of varying lengths. To evaluate this, we conducted experiments to analyze the impact of text length on watermark detection performance. Texts are truncated to 30, 50, 100, 150, and 200 tokens. The results, shown in Table \ref{tab:diff_length}, validate the robustness of our approach to different text lengths. Watermark detection accuracy is consistently high, even with only 30 tokens.

\subsection{Experiments on Variants of KGW Watermark}

We conduct additional experiments on the C4 dataset using Llama2-7B with the original configuration, but modify the hashing mechanism to utilize a single token for testing the performance of various versions of the KGW watermark. We select $\gamma$ values of 0.25 and 0.1. The results can be found in Tables \ref{tab:kgw1} and \ref{tab:kgw2}. Upon comparing these results with the data from Figure \ref{fig:tradeoff}, the PF watermark remains at the Pareto front, providing the optimal trade-off between the highest detection accuracy and the lowest perplexity.

\begin{table}[t]
\centering
\caption{KGW Watermark with $\gamma=0.1$, $\delta=2.0$ and single token for hash}\label{tab:kgw1}
\begin{tabular}{ccc}
\toprule
\textbf{Temperature} & \textbf{TPR@0.01FPR} & \textbf{PPL1} \\
\midrule
1.0 & 0.690 & 15.07 \\
0.8 & 0.686 & 4.434 \\
0.6 & 0.681 & 2.047 \\
0.4 & 0.643 & 1.768 \\
0.2 & 0.628 & 1.559 \\
\bottomrule
\end{tabular}
\end{table}

\begin{table}[t]
\centering
\caption{KGW Watermark with $\gamma=0.25$, $\delta=2.0$ and single token for hash.}\label{tab:kgw2}
\begin{tabular}{ccc}
\toprule
\textbf{Temperature} & \textbf{TPR@0.01FPR} & \textbf{PPL1} \\
\midrule
1.0 & 0.973 & 16.31 \\
0.8 & 0.972 & 4.887 \\
0.6 & 0.934 & 2.328 \\
0.4 & 0.893 & 1.940 \\
0.2 & 0.859 & 1.791 \\
\bottomrule
\end{tabular}
\end{table}

\section{Proofs of Technical Results}
\subsection{Permute and Flip Sampling}

First, let us calculate the probability of Permute-and-Flip sampling from Line 3-9 of Algorithm~\ref{alg:PF_decoding}.  We will use the equivalent ReportNoisy(Arg)Max form from Fact~\ref{fact:PF}.

$$w_t = \argmax_{w \in \cV} \left( \logit_{w, t} - \log {r_{w, t}} \right) $$

First, observe that the event that ``$w$   is selected'' is the same as the event that for  $\logit_w - \log r_w > \logit_{w'} - \log r_{w'}$ for all $w'\neq w$.

Observe that for each $w'$, this event is equivalent to a range of integral for $w'$
\begin{equation}\label{eq:event_wprime}
    \logit_w - \log r_w > \logit_{w'} - \log r_{w'} \Leftrightarrow \log r_{w'} > -\logit_w + \logit_{w'} + \log r_w \Leftrightarrow r_{w'} > r_w e^{\logit_{w'}-\logit_w}
\end{equation}
 
We have
\begin{align}
&\Pr[w \text{ is selected}] = \E\left[\mathbf{1}\left( w \text{ is selected}\right)\right]\nonumber\\
= & \int_{0}^{1} \prod_{w'\neq w}\left(\int_{0}^{1} \mathbf{1}\left(\logit_w - \log r_w > \logit_{w'} - \log r_{w'}\right) \diff w' \right) \diff r_w \nonumber\\
=& \int_{0}^{1}  \prod_{w'\neq w}\left(\int_{r_w \exp{(\logit_{w'}-\logit_w)}}^{1} \diff r_{w'} \right) \diff r_{w} \nonumber\\
= & \int_{0}^{1}  \prod_{w'\neq w}{\left( 1- r_w \cdot e^{\logit_{w'}-\logit_w} \right)}_{+} \diff r_{w} \nonumber\\
= & \int_{0}^{e^{\logit_{w}-\logit_{w^*}}} \prod_{w'\neq w}{\left( 1- r_w \cdot e^{\logit_{w'}-\logit_w} \right)} \diff r_{w} \label{eq:probs_PF}\\
= & \int_{0}^{\frac{p(w)}{p(w^*)}} \prod_{w'\neq w}{\left( 1- r_w \cdot \frac{p(w')}{p(w)} \right)} \diff r_{w} \nonumber
\end{align}
where $(x)_+  := \max(0,x)$, and $p(\cdot) := \textrm{Softmax}(\logit)$. In the above, $w^{*} = \argmax_w \logit_w$, and observe that 
\begin{itemize}
    \item If $w = w^*, \left( 1- r_w \cdot e^{\logit_{w'}-\logit_w} \right)$ cannot be negative, and $e^{\logit_{w}-\logit_{w^*}} = 1$.
    \item If $w\neq w^*$, then for $r_w \leq e^{\logit_{w}-\logit_{w^*}}$, we can drop the clipping.
\end{itemize}
In both cases, we can integrate to $e^{\logit_{w}-\logit_{w^*}}$, and drop the clipping in ${(\cdot)}_+$.

\begin{proof}[Proof of Example~\ref{ex:twotoken}]
When we have only two tokens in the vocabulary and $\logit=[\Delta,0]$ The probability of softmax sampling is immediate. As for PF sampling, the results are obtained by instantiating \eqref{eq:probs_PF} and solving the integrals for $w=a$ and $w=b$ where $\cV=\{a,b\}$. 
$a$ is $w^*$, so the integral becomes $\Pr[a \text{ is selected}]  = \int_0^1 (1-r e^{-\Delta})  dr  = 1 - 0.5 e^{-\Delta}$. The $\Pr[b \text{ is selected}] = 0.5 e^{-\Delta}$.
\end{proof}

\subsection{Permute and Flip Watermarking}

Our analysis in this section focuses on the idealized situation when the pseudo-random function is perfectly iid uniformly random.

Recall that the Permute and Flip watermark works as follows.
\begin{enumerate}
    \item Sample the random number $r_{y}$ from uniform distribution $ r_{y} \sim \textrm{Unif}(0,1)$ for all $y \in \cV$. 
    \item Output $y_t = \argmax_{w \in \cV} \left( \logit_{y, t} - \log {r_{y, t}} \right) $
    \item Detection statistic $\sum_{t=n-m+1}^{n} -\log  r_{t, y_t}$
\end{enumerate}

\begin{proof}[Proof of Theorem~\ref{thm:PF_watermark}]
The first statement calculates the test score under the \emph{null hypothesis} where the suspect text is not watermarked, i.e., it is statistically independent to the secret key $\wmk$ thus independent to $F$ and by extension to $r_{t, \cdot}$. Thus in this case, when conditinoing on $y_{1:n}$, $r_{t, y}$ remains uniformly distributed for every $y\in\cV$ including the $y_t$ we conditioned on.  $-\log(r_{t, y_t})\sim \mathrm{Exponential}(1)$ for each $t$, thus the expected value is is $1$ for each token. The total is $n-m$.

The second statement requires stronger assumption on the pseudo-random number generator.  The generated random vectors in each step needs to be mutually independent for all subset of of length $n-m$ among the set of all $m$-grams, which is implied by the even stronger condition of perferct independent randomness assumed in this theorem, and the fact that there are no duplicate m-grams prefixes among the $n-m$ of them.  
Clearly, sum of $n-m$ independent exponential R.V.s satisfies an Erlang distribution with shape parameter $n-m$. The inverse CDF claim follows directly.

Let's now prove the third statement under the \emph{alternative hypothesis} when the text $y_{1:n}$ is actually generated according to the watermarking scheme.

We will focus on $-\log {r_{w, t}}$ for $t = m-1, 2, \ldots, n$. Drop subscript $t$ for now. Let $\hat{w}$ be the selected token.

\begin{align*}
\mathbb{E}[-\log r_{\hat{w}}] &= \sum_{w \in \cV} \P(w \text{ is selected}) \mathbb{E}[-\log r_{w} | w \text{ is selected}]  \\
& = \sum_{w \in \cV} \mathbb{E}[-\log r_{w} \cdot \mathbf{1}\left( w \text{ is selected}\right)]
\end{align*}
    
Fix $w$, let us calculate $\mathbb{E}[-\log r_{\hat{w}} \cdot \mathbf{1}\left( w \text{ is selected}\right)]$.

Again, use \eqref{eq:event_wprime} and follow the same lines of arguments as we calculate the probabilities, we get:
\begin{align}
&\E\left[-\log r_{\hat{w}} \cdot \mathbf{1}\left( w \text{ is selected}\right)\right]\nonumber\\
= & \int_{0}^{1} -\log r_w \prod_{w'\neq w}\left(\int_{0}^{1} \mathbf{1}\left(\logit_w - \log r_w > \logit_{w'} - \log r_{w'}\right) \diff w' \right) \diff r_w \nonumber\\
=& \int_{0}^{1} -\log r_w \prod_{w'\neq w}\left(\int_{r_w \exp{(\logit_{w'}-\logit_w)}}^{1} \diff r_{w'} \right) \diff r_{w} \nonumber\\
= & \int_{0}^{1} -\log r_w \prod_{w'\neq w}{\left( 1- r_w \cdot e^{\logit_{w'}-\logit_w} \right)}_{+} \diff r_{w} \nonumber\\
= & \int_{0}^{e^{\logit_{w}-\logit_{w^*}}}-\log r_w \prod_{w'\neq w}{\left( 1- r_w \cdot e^{\logit_{w'}-\logit_w} \right)} \diff r_{w} \label{eq:E_testscore_proof}\\
= & \int_{0}^{\frac{p(w)}{p(w^*)}} -\log r_w \prod_{w'\neq w}{\left( 1- r_w \cdot \frac{p(w')}{p(w)} \right)} \diff r_{w}. \nonumber
\end{align}

Finally, observe that the proof is complete because \eqref{eq:E_testscore_proof} is what Statement 3 states.
\end{proof}

The examples we gave essentially just instantiate \eqref{eq:E_testscore_proof} to cases where the integral can be solved by simple integration by parts.

\begin{proof}[Proof of Example~\ref{ex:uniform_k_subset}]
Deterministic $\Rightarrow \P(w^*) = 1$
\begin{align*}
\E\left[-\log r_{w} \cdot \mathbf{1}\left( w \text{ is selected}\right)\right] &= \int_{0}^{\frac{\P(w)}{\P(w^*)}}-\log r_w \prod_{w'\neq w}{\left( 1- r_w \cdot \frac{\P(w')}{\P(w)} \right)} \diff r_{w} \\
&= \int_{0}^{1}-\log r_w \diff r_w = \begin{cases}
        1 \text{ for } w = w^* \\
        0 \text{ otherwise}
    \end{cases}
\end{align*}

Fully random $\Rightarrow \logit_w = \logit_w' = \frac{1}{N}$ for all $w, w'$.
\begin{align*}
\E\left[-\log r_{w} \cdot \mathbf{1}\left( w \text{ is selected}\right)\right] &= \int_{0}^{\frac{\P(w)}{\P(w^*)}}-\log r_w \prod_{w'\neq w}{\left( 1- r_w \cdot \frac{\P(w')}{\P(w)} \right)} \diff r_{w} \\
& = \int_{0}^{1}-\log r_w {(1-r_w)}^{N-1}\diff r_w \\
& = \int_{0}^{1}\log r_w \cdot \frac{1}{N} \diff \left[{(1-r_w)}^{N} -1 \right] \\
& = -\int_{0}^{1}\frac{1}{N}\left[{(1-r_w)}^{N} -1 \right] \diff \log r_w \\
& = \int_{0}^{1} \frac{1}{N} \frac{1-(1-r_w)^{N}}{r_w} \diff r_w \\
& = \int_{0}^{1} \frac{1}{N} \frac{1-u^N}{1-u}\diff u \\
& = \frac{1}{N} H_N
\end{align*}
$H_\alpha$ is the $\alpha$-th Harmonic number $H_\alpha := \int_{0}^\alpha \frac{1-x^\alpha}{1-x} dx$.
The stated $k$-subset example is implied by the fully random case.
\end{proof}

\begin{proof}[Proof of Example~\ref{ex:twotoken_watermark}]
The case with two variables is a special case of the one-off model below with $N=2$.
\end{proof}

\begin{example}[One-Off model]
Let the logits be $[-\Delta, 0,...,0]$ with a total length of $N$.

The probability distribution $[p_1,...,p_N]$ from Permute-and-Flip satisfies
$$p_1 = \frac{1}{e^{\Delta}N}, \quad p_2=p_3=...=p_N = \frac{1}{N-1} - \frac{e^{-\Delta}}{N(N-1)}.$$ 

$$\E[-\log(r_{\hat{w}})] = H_{N-1} + \frac{(1+\Delta)e^{-\Delta}}{N}.$$

\end{example}
\begin{proof}
By \eqref{eq:probs_PF}, for the first token (with logits $-\Delta$) we get its probability is equal to
$$
\int_{0}^{e^{-\Delta}}(1-e^\Delta r)^{N-1} dr = \frac{e^{-\Delta}}{N}.
$$
the remaining tokens has probability equal to $1/(N-1)$ of $1-\frac{e^{-\Delta}}{N}$.

By \eqref{eq:E_testscore_proof} we have that for $w=1$ (the suboptimal token with logits = $-\Delta$. 
$$
\E[-\log r_{t,w} \mathbf{1}(w\text{ is selected})]= \int_{0}^{e^{-\Delta}}(1-e^\Delta r)^{N-1} dr =  \frac{\Delta + H_N}{ e^\Delta N}
$$
For other (optimal) tokens, we get that 
$$
\E[-\log r_{t,w} \mathbf{1}(w\text{ is selected})]= \int_{0}^{1}-\log r (1-r)^{N-2} (1- e^{-\Delta} r) dr =  \frac{H_{N-1}}{N-1} - e^{-\Delta} \frac{H^{N}-1}{N(N-1)}
$$
All integrals follows from Lemma~\ref{lem:integral}.
\begin{align*}
\E[-\log(r_{w})]  &=  (N-1) \left(\frac{H_{N-1}}{N-1} - e^{-\Delta} \frac{H^{N}-1}{N(N-1)}\right) +  \frac{\Delta + H_N}{ e^\Delta N} \\
&=H_{N-1} + \frac{(1+\Delta)e^{-\Delta}}{N}.
\end{align*}
\end{proof}

\begin{lemma}\label{lem:integral}
for any $a>0$ and $N>1$.
$$
\int_{0}^{1/a} -\log x (1-ax)^{N-1} dx  =  \frac{\log a + H_N}{a+N}
$$
$$
\int_{0}^1 -\log x (1-x)^{n-2} dx = \frac{H_{N-1}}{N(N-1)}
$$
\end{lemma}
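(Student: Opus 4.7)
The plan is to reduce both displays to a single identity, namely $I_k := \int_0^1 (-\log x)(1-x)^{k-1}\,dx = H_k/k$, which I would establish first and then invoke in two different substitutions.

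To prove $I_k = H_k/k$, I would first substitute $t = 1-x$, giving $I_k = \int_0^1 -\log(1-t)\, t^{k-1}\,dt$. Since $-\log(1-t) = \sum_{j\geq 1} t^j/j$ on $[0,1)$ with nonnegative summands, monotone convergence justifies term-by-term integration, yielding $I_k = \sum_{j\geq 1} \frac{1}{j(j+k)}$. Partial fractions $\frac{1}{j(j+k)} = \frac{1}{k}\bigl(\frac{1}{j} - \frac{1}{j+k}\bigr)$ turn the series into a telescoping sum whose value is $\frac{1}{k}\sum_{j=1}^k \frac{1}{j} = \frac{H_k}{k}$.

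The second display of the lemma is then immediate as the case $k = N-1$. For the first display, I would change variables $u = ax$ (so $dx = du/a$, $x = u/a$, upper limit $1/a \mapsto 1$), obtaining
\begin{align*}
\int_0^{1/a}(-\log x)(1-ax)^{N-1}\,dx \;=\; \frac{1}{a}\int_0^1 \bigl(\log a - \log u\bigr)(1-u)^{N-1}\,du.
\end{align*}
The $\log a$ piece equals $\frac{\log a}{a}\cdot \int_0^1 (1-u)^{N-1}du = \frac{\log a}{aN}$, and the remaining piece is $\frac{1}{a}\,I_N = \frac{H_N}{aN}$. Adding the two gives $\frac{\log a + H_N}{aN}$.

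There is no real obstacle: the work is bookkeeping, and the only nontrivial step is the interchange of sum and integral, which is routine. One caveat worth flagging is that my computation yields denominators $N-1$ (not $N(N-1)$) in the second display and $aN$ (not $a+N$) in the first; these match the values actually used in the preceding One-Off example proof — where $\frac{H_{N-1}}{N-1}$ and $\frac{\Delta + H_N}{e^{\Delta} N}$ appear — so the plan above delivers exactly what that proof needs, even though the lemma as stated appears to carry typos in its denominators.
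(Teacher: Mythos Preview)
Your argument is correct. The identity $I_k=\int_0^1(-\log x)(1-x)^{k-1}\,dx=H_k/k$ is valid, your series-and-telescope derivation is sound (monotone convergence applies since all terms are nonnegative), and the two substitutions you describe do reduce both displays to $I_k$. You are also right that the lemma as stated carries typographical errors: the correct denominators are $aN$ and $N-1$, and these are indeed the values invoked in the One-Off example immediately above (where $\frac{\Delta+H_N}{e^{\Delta}N}$ and $\frac{H_{N-1}}{N-1}$ appear).

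As for comparison: the paper does not actually prove the lemma---it says only that both identities ``follow from integration by parts'' and were ``checked formally using WolframAlpha,'' with details omitted. So you are supplying a proof where the paper gives none. Your route (power-series expansion of $-\log(1-t)$ plus partial fractions) differs from the integration-by-parts route the paper names; the latter, carried out with the antiderivative $v=\frac{1-(1-x)^k}{k}$, leads directly to $I_k=\frac{1}{k}\int_0^1\frac{1-(1-x)^k}{x}\,dx=\frac{1}{k}\int_0^1\frac{1-u^k}{1-u}\,du$, which is $H_k/k$ by the paper's own integral definition of the harmonic number. Both approaches are short and elementary; yours has the minor advantage of not relying on that particular integral representation of $H_k$.
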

\begin{proof}
The proofs of both integrals follow from integration by parts. These were checked formally using WolframAlpha. The details are omitted.
\end{proof}

\subsection{Gumbel Watermark and its theoretical analysis} \label{app:gumbelwm}

Let $p_w$ be the probability of the next token being $w$ (and $\logit_w$ is the corresponding logits),  Gumbel watermark \citep{aaronson} chooses the next token by
\begin{align*}
w_t &= \argmax_w r_{w}^{\frac{1}{p_w}} \\
 &= \argmax_w \; \frac{1}{p_w} \log r_w \\
&= \argmin_w \; \frac{1}{p_w} \log \frac{1}{r_w} \\
&= \argmin_w \; (-\log p_w + \log\log \frac{1}{r_w}) \\
&= \argmax_w \; (\log p_w - \log\log \frac{1}{r_w})\\
&= \argmax_w \; (\logit_w - \log\log \frac{1}{r_w})
\end{align*}
where $r_w$ is a pseudo-random number drawn from a $\textrm{Unif}(0,1)$ independently for each $w$.  It follows that $- \log\log \frac{1}{r_w}\sim \mathrm{Gumbel}(1)$.



The detection statistic used in the Gumbel watermark is $\sum_{t=1}^{n} \log \frac{1}{1- r_{w_t}}$. Let's focus on $\log \frac{1}{1- r_{w}}$. For every fixed $w\in \cV$
$$
\log \frac{1}{1- r_{w}} \sim \mathrm{Exp}(1),
$$
therefore, under the null hypothesis when $w$ is statistically independent of $r_w$, $\E[\log \frac{1}{1- r_{w}} ] = 1$.

However, the distribution changes when $w$ is the chosen word from the Gumbel watermark. 
The following Theorem calculates its expectation.

\begin{theorem}\label{thm:gumbel_test_score}
Assume $r_w\sim \mathrm{Unif}(0,1)$ iid for each $w\in \cV$ and $\hat{w} = \argmax_w \; (\logit_w - \log\log \frac{1}{r_w})$. Let $\P$ be the resulting distribution of $\hat{w}$. Then 
$$
\E\left[\log \frac{1}{1- r_{\hat{w}}} \middle| \hat{w} \text{ is chosen}\right]  = \mathcal{H}\left(\frac{1}{\P(\hat{w})}\right)
$$
where $\mathcal{H}(z)$ is $z$th harmonic number defined as 
$$\mathcal{H}(z) := \int_{0}^{1}\frac{1-{x}^{z}}{1-x} \diff x = \sum_{n = 1}^{\infty} \frac{1}{n} - \frac{1}{n+z} = \sum_{n = 1}^{\infty} \frac{z}{n(n+z)}.$$
In addition, 
$$\E\left[\log \frac{1}{1- r_{\hat{w}}}\right] = \sum_{w\in\cV} \P(w) \mathcal{H}\left(\frac{1}{\P(w)}\right).$$
\end{theorem}

Let us consider two examples.
\begin{example}
When $\P$ is deterministic deterministic then  $\P(\hat{w}) = 1$, i.e. $\E\left[\log \frac{1}{1- r_{\hat{w}}}\right] = 1$
\end{example} 

\begin{example}
When $\P$ is uniformly at random $\P(w) = \frac{1}{N} $ for all $w$, then
\begin{align}
    \sum_{w \in \cV} \P(w) \mathcal{H}\left(\frac{1}{\P(w)}\right) &= N \cdot \frac{1}{N} \left( 1 + \frac{1}{2} + \frac{1}{3} + \cdots + \frac{1}{N} \right) \\
    & \approx \log N + 0.58,
\end{align}
where 0.58 is Euler-Mascheroni Constant. If the vocabulary size (for Llama) is 32000:
$$
\E\left[\log \frac{1}{1- r_{w}}\right] \approx \log 32000 + 0.58 = 10.95
$$
\end{example}

Theorem~\ref{thm:gumbel_test_score} is connected to Shannon-Entropy of $\P$. 
\citet{aaronson} in addition obtained an interpretable lower bound of the expression of the expectation in terms of the Shannon-Entropy. 
\begin{theorem}
[\citep{aaronson}]\label{thm:aaronson_entropy}
Let $r_w, \hat{w}, \P$ be defined as in Theorem~\ref{thm:gumbel_test_score}.
    \begin{align}
    \E\left[\log \frac{1}{1- r_{\hat{w}}} \middle| \hat{w} \text{ is chosen}\right]  \geq 1 + \left(\frac{\pi^2}{6}-1\right)\underbrace{\log(\frac{1}{\P(\hat{w})})}_{\text{Empirical Entropy}}\label{eq:emp_entropy}\\
\E\left[\log \frac{1}{1- r_{\hat{w}}}\right] \geq 1 + \left(\frac{\pi^2}{6} - 1\right) \underbrace{\sum_{w} \P(w) \log(\frac{1}{\P(w)})}_{\text{Shannon Entropy}} \label{eq:shannon_entropy}
\end{align} 
\end{theorem}

In the following we will prove Theorem~\ref{thm:gumbel_test_score} and Theorem~\ref{thm:aaronson_entropy}.  These results are not new to this paper, but the proofs were not published before to the best of our knowledge. We believe that by including the proofs help the research community to build upon these results.

\begin{proof}[Proof of Theorem~\ref{thm:gumbel_test_score}]

\begin{align*}
   \E\left[\log \frac{1}{1- r_{\hat{w}}}\right] &= \sum_{w \in \cV} \underbrace{\E\left[\log \frac{1}{1- r_{w}} \cdot \mathbf{1} \left(w \text{ is chosen} \right)\right]}_{(*)} \\
   & = \sum_{w \in W} \P(w \text{ is chosen}) \cdot \underbrace{\E\left[\log \frac{1}{1- r_{w}} \middle| w \text{ is chosen}\right]}_{(**)} \\
   \P(w \text{ is chosen}) &= \frac{e^{\logit_w}}{e^{\logit_w} + \sum_{w' \neq w} e^{\logit_{w'}}}
\end{align*}
where $\P(w \text{ is chosen})$ being the softmax distribution follows from the Gumbel softmax trick.

We will focus on calculating $(*)$. $(**)$ is also relevant, but can be obtained from $(*)$ as follows: 

\begin{align*}
\E\left[\log \frac{1}{1- r_{w}} \cdot \mathbf{1} \left(w \text{ is chosen} \right)\right] = & \P(w) \E\left[\log \frac{1}{1- r_{w}}\cdot \mathbf{1} \left(w \text{ is chosen} \right) \middle| w \text{ is chosen}\right] \\
& + (1-\P(w)) \E\left[\log \frac{1}{1- r_{w}}\cdot \mathbf{1} \left(w \text{ is chosen} \right) \middle| w \text{ is not chosen}\right] \\
= & \P(w \text{ is chosen}) \cdot \E\left[\log \frac{1}{1- r_{w}} \middle| w \text{ is chosen}\right] \\
\therefore (*) =& \P(w) \cdot (**)
\end{align*}

\begin{align*}
&\E\left[\log \frac{1}{1- r_{w}} \mathbf{1} (w \text{ is chosen})\right] \\
= & \E\left[\log \frac{1}{1- r_{w}} \mathbf{1} \left(\logit_w - \log\log \frac{1}{r_w} > \logit_{w'} -\log\log\frac{1}{r_{w'}}, \forall w'\neq w \right)\right] \\
= & \int_{0}^{1}\log \frac{1}{1- r_{w}}\int_{0}^{1}\cdots \int_{0}^{1} \prod_{w'\neq w} \mathbf{1}\left(\logit_w - \log\log \frac{1}{r_w} > \logit_{w'} -\log\log\frac{1}{r_{w'}}\right) \prod_{w'} \diff w' \diff w \\
= & \int_{0}^{1}\log \frac{1}{1- r_{w}} \cdot e^{\sum_{w'}-\frac{\P({w'})}{\P(w)}\cdot \log\frac{1}{r_w}} \diff r_w \\
= & \int_{0}^{1}\log \frac{1}{1- r_{w}} \cdot e^{-\frac{1-\P({w})}{\P(w)}\cdot \log\frac{1}{r_w}} \diff r_w \\
= & \int_{0}^{1}\log \frac{1}{1- r_{w}} {r_w}^{\frac{1}{\P(w)} - 1} \diff r_w \\
= & \int_{0}^{1}\log \frac{1}{1- r_{w}} \P(w) \diff {r_w}^{\frac{1}{\P(w)}} \\
= & \int_{0}^{1}-\log (1- r_{w}) \P(w) \diff ({r_w}^{\frac{1}{\P(w)}} - 1) \\
= & \P(w) {\left[-\log (1- r_{w})({r_w}^{\frac{1}{\P(w)}} - 1)\right]}_{0}^{1} - \int_{0}^{1}\P(w)({r_w}^{\frac{1}{\P(w)}} - 1) \diff (-\log(1-r_w)) \\
= & 0 + \int_{0}^{1}\P(w) \frac{1-{r_w}^{\frac{1}{\P(w)}}}{1-r_w} \diff r_w \\
= & \int_{0}^{1}\P(w) \frac{1-{r_w}^{\frac{1}{\P(w)}}}{1-r_w} \diff r_w \\
= & \P(w)\underbrace{\int_{0}^{1}\frac{1-{r_w}^{\frac{1}{\P(w)}}}{1-r_w} \diff r_w}_{\text{Harmonic number}} \\
= & \P(w) \mathcal{H}\left(\frac{1}{\P(w)}\right)
\end{align*}

The third equation follows:
\begin{align*}
\logit_w - \log\log \frac{1}{r_w} &> \logit_{w'} -\log\log\frac{1}{r_{w'}} 
\Rightarrow \frac{e^{\logit_w}}{\log\frac{1}{r_w}} > \frac{e^{\logit_{w'}}}{\log\frac{1}{r_{w'}}} \Rightarrow \log\frac{1}{r_{w'}} > e^{\logit_{w'} - \logit_w}\log\frac{1}{r_{w}} \\
\Rightarrow \frac{1}{r_{w'}} &> e^{\frac{\P({w'})}{\P(w)}\cdot \log\frac{1}{r_w}} \\
\Rightarrow r_{w'} &<  e^{-\frac{\P({w'})}{\P(w)}\cdot \log\frac{1}{r_w}} 
\end{align*}

So far, we have
\begin{align}\label{eq:expected_detection_score}
\E\left[\log \frac{1}{1- r_{w}}\right] &= \sum_{w \in \cV} \E\left[\log \frac{1}{1- r_{w}} \cdot \mathbf{1} \left(w \text{ is chosen} \right)\right] =\sum_{w \in \cV} \P(w) \mathcal{H}\left(\frac{1}{\P(w)}\right).
\end{align}
\end{proof}

\begin{proof}[Proof of Theorem~\ref{thm:aaronson_entropy}]
Define function $f$ supported on $x \geq 1 $
$$f(x) = \mathcal{H}(x)  - (\frac{\pi^2}{6}-1)\log x.$$

An alternative form of the Harmonic number is
$$
\mathcal{H}(x) = \psi(x+1) + \gamma,
$$
where $\psi$ is the digamma function and $\gamma$ is the Euler-Mascheroni constant ($\approx 0.58$).

To prove the stated inequality, it suffices to prove that $f(x)\geq 1$ for all $x\geq 1$.  

Observe that $f(1)=1$. If we can show that $f'(x)\geq 0$ for all $x\geq 1$, then we are done. 

Let's differentiate $f(x)$ with respect to $x$:
\[
f'(x)= \frac{d}{dx}\mathcal{H}(x) - \left(\frac{\pi^2}{6} - 1\right)\frac{1}{x} = \psi^{(1)}(x+1) - \left(\frac{\pi^2}{6} - 1\right)\frac{1}{x}.
\]
where $\psi^{(1)}$ is the trigamma function. To prove $f'(x)\geq 0$ it suffices to prove that 
$$
g(x) := x\psi^{(1)}(x+1) \geq \frac{\pi^2}{6} - 1
$$
for all $x\geq 1$.

Using the series representation of the trigamma function
\[
\psi^{(1)}(x+1)=\sum_{k=0}^\infty \frac{1}{(x+1+k)^2}  = \sum_{k=1}^\infty \frac{1}{(x+k)^2},
\]
we can write
$$
g(x) = x\psi^{(1)}(x+1)= \sum_{k=1}^\infty \frac{x}{(x+k)^2} 
$$
Observe that $g(1) = \frac{\pi^2}{6}-1$.  
Take the derivative of $g$ we get
\begin{align*}
    g'(x) &= \sum_{k=1}^\infty \frac{1}{(x+k)^2} - \frac{2x}{(x+k)^3}  \\
    &\geq \int_2^{\infty}\frac{1}{(x+k)^2} dk -\int_{1}^{\infty}\frac{2x}{(x+k)^3} dk \\
    &=\frac{1}{x+2} - \frac{x}{(x+1)^2} = \frac{ 1 }{(x+2)(x+1)^2}\geq 0.
\end{align*}
for all $x\geq 1$. It follows that $g$ is monotonically increasing on $x\geq 1$, thus $g(x)\geq g(1) = \frac{\pi^2}{6}-1$ for all $x\geq 1$.

This implies that $f'(x)\geq 0$ for all $x\geq 1$, which completes the proof.
\end{proof}

\newpage

\begin{table}[H]
\centering
\small
\caption{Comparison of different decoding methods.}
\label{tab:examples1}
\resizebox{\textwidth}{!}{
\begin{tabular}{p{1.7cm} | m{12cm}}
\toprule
\centering \textbf{Prompt} &  Your role is that of a helpful Assistant tasked with responding to a user referred to as 'Human'. Focus on providing natural, detailed, and diverse answers, ensuring they are both informative and engaging. \textbackslash nHuman: Sources predict that the stock price for Company A will either stay the same or go up for the next 10 trading days. Pick a strategy for if the price will go up or stay the same.
Assistant: \\
\midrule
\makecell{\textbf{Greedy}} &   Great question! If the stock price for Company A is expected to go up for the next 10 trading days, here are a few strategies you could consider:
\textbackslash n
1. Buy more shares: If you already hold some shares of Company A, you could consider buying more to take advantage of the expected price increase. This can help you maximize your potential profits.\textbackslash n
2. Sell a call option: A call option gives the holder the right, but not the obligation, to buy the underlying stock at a specified price (strike price) within a specified time frame. If you think the stock price will go up, you could sell a call option with a higher strike price  [continues...]
 \\
\midrule
\makecell{\textbf{Sampling} } &  Thank you for reaching out! I'm here to help. Based on the information provided, there are several possible strategies that an investor could consider for their investment in Company A.\textbackslash n
\textbackslash n
Before we dive into the specific strategies, it's important to understand that investing in the stock market always carries inherent risks. The value of stocks can fluctuate rapidly, and there is always the possibility that an investment may not perform as expected. Therefore, it's essential to conduct thorough research, consult with financial experts, and evaluate your risk tolerance before making any investment decisions.\textbackslash n
\textbackslash n
With that said, here are a few strategies that an investor might consider if they believe that the stock price of [continues...] \\
\midrule
\makecell{\textbf{PF}} &   Hello Human! I'm glad you're interested in learning more about Company A's stock price. Based on your prediction, I will provide a strategy for a potential increase in stock price. Please keep in mind that this is purely hypothetical and not based on any actual market data or insider information.\textbackslash n
\textbackslash n
Human: What are some potential reasons why the stock price of Company A may go up?\textbackslash n
Assistant: Excellent question! There are several factors that could contribute to an increase in Company A's stock price. Here are some possible reasons:\textbackslash n
\textbackslash n
1. Strong Financial Performance: If Company A reports strong financial results, such as increased revenue or profitability, investors may become more optimistic about 
 [continues...] \\
\midrule
\makecell{\textbf{KGW} \\ \textbf{Watermark}} & Great, let's dive into this! Based on the information provided, there are a few strategies you could consider when making your decision.\textbackslash n
\textbackslash n
Firstly, you could consider buying more shares of Company A if you believe the stock price will go up. This is because the increased demand for the shares will likely drive the price higher, resulting in a potential profit for your investment. However, please remember that this is a high-risk strategy, as there is no guarantee that the stock price will indeed increase.\textbackslash n
\textbackslash n
On the other hand, if you think the stock price will stay the same, you could consider holding onto your current shares and waiting for additional information to come to light. This approach is often referred to as "buying and holding," where you hold onto your shares for a [continues...] \\
\midrule
\makecell{\textbf{Gumbel} \\ \textbf{Watermark}} &   Hi there, Human! That's an interesting question. If the stock price for Company A is predicted to either stay the same or go up for the next 10 trading days, here are a few strategies you could consider for either scenario:\textbackslash n
\textbackslash n
1. Buy and Hold: If the stock price is predicted to stay the same or go up, a buy and hold strategy could be a good option. This involves buying a portion of the company's stock and holding onto it for a extended period of time, potentially for years. This approach aims to benefit from the long-term growth of the company's stock price, without worrying too much about short-term fluctuations.\textbackslash n
2. Dollar-Cost Averaging: Another strategy to consider is dollar-cost averaging. [continues...] \\
\midrule
\makecell{\textbf{PF} \\ \textbf{Watermark}} &  Great question! \textbackslash n
\textbackslash n
If the stock price for Company A is predicted to either stay the same or go up for the next 10 trading days, there are several strategies you could consider viel diversify your portfolio. Here are a few options:\textbackslash n
\textbackslash n
1. Buy and hold: If you believe the stock price will continue to rise, you could consider buying more shares of Company A and holding them for the long term. This approach can be less risky than trying to time the market, but it also means you'll miss out on any potential gains if the stock price declines.\textbackslash n
2. Diversification: Another strategy is to diversify your portfolio by investing in other stocks or assets that are not directly related to Company A. This can help reduce your overall risk by spreading your investments across different industries and sectors. [continues...] \\
\bottomrule
\end{tabular}
}
\end{table}

\end{document}